\makeatletter
\newcommand{\@disableuaihooks}{
  \def\maketitlehooka{}
  \def\maketitlehookb{}
  \def\maketitlehookc{}
  \def\maketitlehookd{}
}
\makeatother
\documentclass[accepted]{uai2026} % for initial submission
\usepackage[american]{babel}
\usepackage{natbib} % has a nice set of citation styles and commands
\usepackage{mathtools} % amsmath with fixes and additions
\usepackage{booktabs} % commands to create good-looking tables
\usepackage{tikz} % nice language for creating drawings and diagrams
\usepackage{amsmath}
\usepackage{amsfonts}
\usepackage{amsthm}
\usepackage{thmtools}
\usepackage{thm-restate}
\usepackage{subcaption}

\newtheorem{thm}{Theorem}[section] 
\newtheorem{crl}[thm]{Corollary}

\newcommand{\reals}{\mathbb{R}}
\newcommand{\loss}{\mathcal{L}}

\newcommand{\smallsum}[2]{{\textstyle%
	\sum\limits_{\scriptscriptstyle%
        #1}^{\scriptscriptstyle #2}}}
\newcommand{\smalldv}[1]{{\textstyle\dv{#1}}}
\newcommand{\smallfrac}[2]{{\textstyle\frac{\strut #1}{\strut #2}}}

\NewDocumentCommand{\dv}{ O{} m }{%
  \IfNoValueTF{#1}
    { \frac{\mathrm{d}}{\mathrm{d}#2} }
    { \frac{\mathrm{d}#1}{\mathrm{d}#2} }
}
\NewDocumentCommand{\dd}{ o m }{%
  \mathop{}\!%
  \IfNoValueTF{#1}
    {\mathrm{d} #2}
    {\mathrm{d}^{#1} #2}
}

\newcommand{\f}{f}
\newcommand{\fii}{u}
\newcommand{\fiii}{g}

\newcommand{\nonlinearity}{\phi}

\newcommand{\dimout}{K}

\newcommand{\param}{\theta}

\newcommand{\paramii}{\vartheta}

\newcommand{\paramiii}{\psi}

\newcommand{\ntkk}{\Theta}
\newcommand{\datax}{\mathcal{X}}
\newcommand{\datay}{\mathcal{Y}}

\newcommand{\nngpk}{\kappa}
\newcommand{\flin}{f_{\text{lin}}}

\newcommand{\err}{\epsilon}

\newcommand{\rnd}{\text{rnd}}

\title{On the Equivalence of Random Network Distillation,\\ Deep Ensembles, and Bayesian Inference}

\author[1]{\href{mailto:<m.a.zanger@tudelft.nl>?Subject=Your UAI 2026 paper}{Moritz A. Zanger}{}}
\author[1]{Yijun Wu}
\author[1]{Pascal R. Van der Vaart}
\author[1]{Wendelin Boehmer}
\author[1]{Matthijs T. J. Spaan}

\affil[1]{%
    Delft University of Technology\\
    Delft, 2628 XE, The Netherlands
}
  
\begin{document}
\maketitle

\begin{abstract}
Uncertainty quantification is central to safe and efficient deployments of deep learning models, yet many computationally practical methods lack lacking rigorous theoretical motivation. Random network distillation (RND) is a lightweight technique that measures novelty via prediction errors against a fixed random target. While empirically effective, it has remained unclear what uncertainties RND measures and how its estimates relate to other approaches, e.g., Bayesian inference or deep ensembles. We establish these missing theoretical connections by analyzing RND within the neural tangent kernel framework in the limit of infinite network width. Our analysis reveals two central findings in this limit: (1) The uncertainty signal from RND---its squared self-predictive error---is equivalent to the predictive variance of a deep ensemble. (2) By constructing a specific RND target function, we show that the RND error distribution can be made to mirror the centered posterior predictive distribution of Bayesian inference with wide neural networks. Based on this equivalence, we moreover devise a posterior sampling algorithm that generates i.i.d. samples from an exact Bayesian posterior predictive distribution using this modified \textit{Bayesian RND} model. Collectively, our findings provide a unified theoretical perspective that places RND within the principled frameworks of deep ensembles and Bayesian inference, and offer new avenues for efficient yet theoretically grounded uncertainty quantification methods.

\end{abstract}

\section{Introduction}\label{sec:intro}

Quantifying predictive uncertainty remains a cornerstone of reliable machine learning and underpins applications from safe robotics to efficiently exploring agents and autonomous scientific discovery. Bayesian inference is widely regarded as a theoretical gold‐standard to this end \citep{neal1996bayesian, goan2020bayesian} but its application to neural networks is typically intractable in practice, requiring approximations of simplified posteriors through variational inference \citep[VI,][]{kingmaAutoEncodingVariationalBayes2014, galDropoutBayesianApproximation2016a, blei2017variational} or complex sampling mechanisms through Markov chain Monte Carlo approaches \citep[MCMC, ][]{chen2014stochastic, liu2016stein, garriga2021exact}. Deep ensembles \citep{dietterich2000ensemble, lakshminarayananSimpleScalablePredictive2017} on the other hand maintain several independently initialized models to quantify predictive variance as uncertainty. Due to their simplicity and relative practical reliability, deep ensembles have become a widely established alternative to Bayesian approaches for uncertainty quantification in deep learning \citep{abdarReviewUncertaintyQuantification2021}.

However, both ensemble methods and approximate Bayesian methods typically incur substantial computational and memory costs, in particular for larger-scale models, motivating more efficient alternatives. RND \citep{burda2018exploration} offers one such approach: by training a \textit{predictor network} to mimic the outputs of a fixed, randomly initialized \textit{target network}, RND produces a simple novelty or uncertainty signal via the squared prediction error. Random network distillation (RND) has seen empirical success in exploration, out-of-distribution detection, and continual learning \citep{burda2018exploration, nikulin2023anti, matthews2024craftax}, yet the theoretical understanding of the nature of its uncertainty estimates remains blurry. In particular, it is unclear how---or whether---the RND error relates to the principled uncertainties produced for example by Bayesian inference or deep ensembles.

In this paper, we establish these missing theoretical connections by analyzing random network distillation in the idealized setting of infinite network width. In particular, we establish a Gaussian process (GP) interpretation of the self-predictive RND errors in the limit of infinitely wide neural networks, drawing on Neural Tangent Kernel (NTK) theory \citep{jacotNeuralTangentKernel2020, leeWideNeuralNetworks2020}. Our three main contributions are:
\begin{enumerate}%[label=(\roman*)]
  \item {\textit{Ensemble equivalence with Standard RND:}} We prove that, in the idealized infinite width limit, the squared prediction errors of standard RND coincide exactly with the variance of a deep ensemble.  
  \item {\textit{Posterior equivalence with Bayesian RND:}} By engineering the RND target function, we design a \textit{Bayesian RND} variant whose error distribution matches that of the exact Bayesian posterior predictive distribution of a neural network in the limit of infinite width.
  \item {\textit{Posterior sampling with Bayesian RND:}} Based on a multi-headed Bayesian RND model, we devise a posterior sampling algorithm that produces i.i.d. samples of the exact Bayesian posterior predictive distribution of neural networks in the limit of infinite width.
\end{enumerate}
This unifying perspective on the uncertainty estimates produced by RND, deep ensembles, and Bayesian inference provides a novel understanding and theoretical support for the empirical effectiveness of RND and suggests avenues for future research directions towards principled Bayesian inference with minimal computational overhead.
  
\section{Preliminaries}

We begin by establishing notation, defining RND formally, and briefly introducing the theoretical framework used in our analysis. In our analysis, we consider fully connected neural networks $\f(x; \param_t)$ of $L$ layers of widths $n_1, \dots, n_L = n$, parametrized by $\param_t$ at time $t$. The forward computation of such networks is defined recursively with $z_i^l(x; \param^{\leq l}_t)$ denoting the $i$-th output of layer $l$ and
\begin{align} \label{eq:forwardpass}
\begin{split}
    z_i^l(x,\param^{\leq l}_t)&=\sigma_b b_i^l + \frac{\sigma_w}{\sqrt{n_{l-1}}}\smallsum{j=1}{n_{l-1}}w_{ij}^l x_j^l(x) \\ x_j^l(x) &=\nonlinearity(z_j^{l-1}(x;\param^{\leq l-1}_t)) \,,
\end{split}
\end{align} 
where $\param^{\leq l}_t$ denotes the parameters $\{w^1,b^1, \dots, w^l, b^l\}$ up to layer $l$,  $\sigma_b$ and $\sigma_w$ denote scaling parameters of the forward computation, and $\nonlinearity:\reals \xrightarrow{} \reals$ is a Lipschitz-continuous nonlinearity. In Eq.~\eqref{eq:forwardpass}, $n_0 = d_{in}$ and $x^1(x) = x$. The output of a scalar-output neural network is then given by $\f(x;\param_t) = z^L(x;\param^{\leq L}_t)$ . We furthermore assume that parameters are initialized i.i.d. from a normal distribution $\param_0 \sim \mathcal{N}(0,I)$\footnote{Also known as NTK-parametrization, where variance scalings $\sigma_b$ and $\sigma_w$ affect both forward and gradient computations, yielding well-behaved gradients in the infinite-width limit.}). For convenience, we will sometimes overload notation to concatenate function outputs, for example indicating a set $\datax = \{x_i \in \reals^{d_{\text{in}}}\}_{i=1}^{N_D}$  and the corresponding function output as a column vector $\f(\datax;\param_t) = (\f(x_i;\param_t))_{i=1}^{N_D}$, where $\f(\datax;\param_t) \in \reals^{N_D \times \dimout}$ or matrix-valued identities $\Sigma(\datax,\datax) = ( \Sigma(x_i,x_j))_{i,j=1}^{N_D}$ , where $\Sigma(\datax,\datax) \in \reals^{N_D \times N_D}$. For conciseness our notation will furthermore use a shorthand for covariance and kernel matrices denoting $\Sigma_{\datax \datax} \equiv \Sigma (\datax,  \datax)$. In the following we briefly review methods pertinent to this work. 

\paragraph{Random network distillation.} \label{par:rnd_definition}
Random network distillation \citep{burda2018exploration} is an uncertainty quantification technique that employs two neural networks of identical architecture: A fixed, randomly initialized target network $\fiii(x; \paramiii_0): \reals^{d_{\text{in}}} \to \reals^{K}$ , and a \emph{predictor network} $\fii(x; \paramii_t)$, where parameters $\paramii_t$ are subject to optimization via gradient descent. The predictor is trained to minimize the expected squared difference to the target network's output on a set of data points $\datax = \{x_i \in \reals^{d_{\text{in}}}\}_{i=1}^{N_D}$ 
\begin{align}\label{eq:rnd_loss}
    \loss_{\rnd}(\paramii_t) = \smallfrac{1}{2} \| \fii(\datax; \paramii_t) - \fiii(\datax; \paramiii_0) \|^2_2 \,.
\end{align}
It is common to design RND with a multi headed architecture with output dimension $K$ and individual output heads $\{ \fii_i(x; \paramii_t)\}_{i=1}^\dimout$, and $\{\fiii_i(x; \paramiii_0) \}_{i=1}^\dimout$, where the sum of squared prediction errors $\err_i(x; \paramii_t, \paramiii_0) = \fii_i(x; \paramii_t) - \fiii_i(x; \paramiii_0)$  at a test point $x$ serves as an uncertainty signal
\begin{align}\label{eq:rnd_error}
\err^2(x; \paramii_t, \paramiii_0) = \smallfrac{1}{K} \smallsum{i=1}{K} \bigl( \fii_i(x; \paramii_t) - \fiii_i(x; \paramiii_0) \bigr)^2\,.
\end{align}

\paragraph{Gaussian processes.} \label{par:framework} In our analysis, we will frequently use GPs to model distributions over random functions: A univariate GP \citep{rasmussen2006gp} defines a distribution over functions $\f^{0} \sim \mathcal{GP}(\mu^{0}, \Sigma^{0})$  characterized by a mean function $\mu^{0}:\reals^{d_{\text{in}}}\xrightarrow{}\reals$ and a covariance (kernel) function $\Sigma^{0}: \reals^{d_{\text{in}}}\times\reals^{d_{\text{in}}} \xrightarrow{} \reals$ such that $f_0(\datax_T)$ follows a multivariate Gaussian distribution $f_0(\datax_T) \sim \mathcal{N}(\mu^{0}(\datax_T), \Sigma^{0}(\datax_T, \datax_T))$  for any finite set of evaluation points $\datax_T = \{x_i^{\text{Test}}\}_{i=1}^{N_T}$. We can condition a prior GP $\mathcal{N}(\mu^{0}(\datax_T), \Sigma^{0}(\datax_T, \datax_T))$ on training data $\datax = \{x_i\}_{i=1}^{N_D}$ and labels $\datay = \{y_i\}_{i=1}^{N_D}$  to obtain a posterior GP whose \emph{posterior predictive distribution} is Gaussian with mean and covariance given by
\begin{align}\label{eq:standard_GP}
\begin{split}
    \mu(\datax_T) &= \mu^{0}(\datax_T) + \Sigma^{0}_{\datax_T \datax} ( \Sigma^{0}_{\datax \datax})^{-1} \bigl(\datay-\mu^{0}(\datax) \bigr), \\
    \Sigma_{\datax_T \datax_T} &= \Sigma^{0}_{\datax_T \datax_T} - \Sigma^{0}
    _{\datax_T \datax} ( \Sigma^{0}_{\datax \datax})^{-1} \Sigma^{0}_{\datax \datax_T} \,.
\end{split}
\end{align}
% Our theoretical analysis is situated in the infinite-width limit of neural networks. In this regime, previous work has shown that neural networks at initialization are described by a GP, known as the NNGP \citep{lee2017deep}, $\f(\datax_T; \param_0) \sim \mathcal{GP}(0, \nngpk_{\datax_T \datax_T})$ with $\kappa_{\datax_T \datax_T} = \mathbb{E}_{\param_0}[\f(\datax_T; \param_0) \f(\datax_T; \param_0)^\top]$ being the NNGP kernel function. 

\paragraph{Learning dynamics with infinite width.}

We turn to analytical tools to establish solutions to the learning dynamics of neural networks in the limit of infinite width $n \to \infty$. Within this setting, we consider the training dynamics under \emph{gradient flow}, the continuous-time limit of gradient descent $\frac{d}{dt} \param_t = -\nabla_\param \mathcal{L} (\param_t)$. Under gradient flow with a square loss $\mathcal{L}(\param_t) = \frac{1}{2}\|\f(\datax;\param_t) - \datay\|_2^2$, the evolution of the NN $\f$ is described by a differential equation in function space 
\begin{align}\label{eq:function_de}
    \smalldv{t} \f(x; \param_t) &= \nabla_\param \f(x; \param_t)^\top \smalldv{t} \param_t \nonumber
    \\ 
    &= -\nabla_\param \f(x; \param_t)^\top \nabla_\param \f(\datax; \param_t) (\f(\datax; \param_t) - \datay) \nonumber \\
    &\equiv -\ntkk_t(x, \datax) (\f(\datax; \param_t) - \datay) \,.
\end{align}  
The above learning dynamics are governed by a gradient similarity function, called the \textit{neural tangent kernel} \citep[NTK,][]{jacotNeuralTangentKernel2020}, $\ntkk_t(x, x') = \nabla_\param \f(x; \param_t)^\top \nabla_\param \f(x'; \param_t)$. While this inner product is dynamic and therefore intractable in general, the limit of infinite network width yields a remarkable simplification: 1.) due to large number effects, the inner product kernel $\ntkk_0(x,x')$ at initialization is deterministic despite the random initialization of $\param_0$; 2.) $\ntkk_t(x,x')$ remains constant throughout $t$ under gradient flow \citep{jacotNeuralTangentKernel2020, leeWideNeuralNetworks2020}. In particular, this means $\lim_{n \to \infty} \ntkk_0(x, x') = \lim_{n \to \infty} \ntkk_t(x, x') \equiv \ntkk(x, x')$ and converts Eq.~\ref{eq:function_de} into a linear ordinary differential equation, which can be solved analytically. It can be shown that, under mild conditions, $\f(x;\param_t)$ converges to the kernel regression solution \citep[see ][and Appendix~\ref{thm:converged_f}]{jacotNeuralTangentKernel2020}
\begin{align}
    \f(x;\param_\infty) &= 
    \f(x;\param_0) - \ntkk_{x \datax} \ntkk_{\datax \datax}^{-1} \bigl( \datay - \f(\datax; \param_0) \bigr) \,,
\end{align}

Moreover, \citet{lee2017deep} show that both $\f(x;\param_0)$ and $\f(x;\param_\infty)$ are indeed GPs described by the neural network Gaussian process \citep[NNGP,][]{lee2017deep} $\f(x;\param_0) \sim \mathcal{GP}(0, \kappa_{x x'})$ and the converged GP defined in Theorem \ref{thm:converged_f_dist}. 
\begin{restatable}{thm}{rndwidenndistribution}\citep{leeWideNeuralNetworks2020}(Distribution of post-convergence neural network functions) \label{thm:converged_f_dist}
    Let $\f(\datax_T;\param_\infty)$ be a NN as defined in Eq.\eqref{eq:forwardpass}, and let $\datax_T$ be testpoints. For random initializations $\param_0\sim\mathcal{N}(0,I)$, and in the limit $n \to \infty$, $\f(\datax_T;\param_\infty)$ distributes as a Gaussian with mean and covariance given by
    \begin{align*}
        \mathbb{E}[\f(\datax_T, \param_\infty)] &= \ntkk_{\datax_T\datax}\ntkk_{\datax \datax}^{-1}\datay \,, \\
        \Sigma^\f_{\datax_T \datax_T}(\param_\infty) &= \nngpk_{\datax_T \datax_T} + \ntkk_{\datax_T \datax} \ntkk_{\datax \datax}^{-1} \nngpk_{\datax \datax} \ntkk_{\datax \datax}^{-1}\ntkk_{\datax \datax_T} 
        \\ & \quad - \bigl( \ntkk_{\datax_T \datax}\ntkk_{\datax \datax}^{-1}\nngpk_{\datax \datax_T} + \text{h.c.} \bigr) \,,
    \end{align*}
    where h.c. is the Hermitian conjugate of the preceding term.
\end{restatable}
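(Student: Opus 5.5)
The plan is to start from the converged kernel-regression form stated just before the theorem (proved in Appendix~\ref{thm:converged_f}). In the limit $n\to\infty$ the NTK is deterministic and constant, so the converged network is an \emph{affine} function of the initial network outputs. Writing $\f_0 \equiv \f(\cdot;\param_0)$, I will use
\begin{align*}
\f(\datax_T;\param_\infty) = \f_0(\datax_T) + \ntkk_{\datax_T\datax}\ntkk_{\datax\datax}^{-1}\bigl(\datay - \f_0(\datax)\bigr).
\end{align*}
The sign here must be chosen so that $\f(\datax;\param_\infty)=\datay$ at the training points. The key observation is that all randomness enters only through the joint vector $(\f_0(\datax_T),\f_0(\datax))$. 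The matrices $\ntkk_{\datax_T\datax}$ and $\ntkk_{\datax\datax}^{-1}$ are deterministic in the limit, and I assume $\ntkk_{\datax\datax}$ is positive definite.

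The first step is to invoke the NNGP result of \citet{lee2017deep}. At initialization, $\f_0$ converges to $\mathcal{GP}(0,\nngpk)$. In particular, the stacked vector $(\f_0(\datax_T),\f_0(\datax))$ is jointly Gaussian with mean zero and block covariance built from $\nngpk_{\datax_T\datax_T}$, $\nngpk_{\datax_T\datax}$ and $\nngpk_{\datax\datax}$.

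The second step uses the fact that an affine map of a Gaussian vector is Gaussian. Let $A=\ntkk_{\datax_T\datax}\ntkk_{\datax\datax}^{-1}$. Then $\f(\datax_T;\param_\infty) = A\datay + [I,\,-A]\,(\f_0(\datax_T),\f_0(\datax))^\top$. The mean is $A\datay = \ntkk_{\datax_T\datax}\ntkk_{\datax\datax}^{-1}\datay$, because $\f_0$ has zero mean.

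The covariance is $[I,-A]\,\nngpk\,[I,-A]^\top$. Expanding it gives four terms:
\begin{itemize}
\item $\nngpk_{\datax_T\datax_T}$;
\item $A\nngpk_{\datax\datax}A^\top = \ntkk_{\datax_T\datax}\ntkk_{\datax\datax}^{-1}\nngpk_{\datax\datax}\ntkk_{\datax\datax}^{-1}\ntkk_{\datax\datax_T}$, using the symmetry of $\ntkk$;
\item the two cross terms $-A\nngpk_{\datax\datax_T}$ and its transpose, which together form the ``h.c.'' term.
\end{itemize}
These match the claimed expressions exactly.

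The main obstacle is justifying the exchange of limits. Two things must hold jointly: the convergence of the initial function to the NNGP, and the constancy of the NTK along the entire gradient flow trajectory, so that the $t\to\infty$ and $n\to\infty$ limits commute. Only then does the affine representation hold in distribution at the limit. I would handle this by citing the linearization theorem of \citet{leeWideNeuralNetworks2020}. It gives $\sup_t\|\f(\cdot;\param_t)-\flin(\cdot;\param_t)\|\to 0$ in probability, where $\flin$ is the first-order Taylor model around $\param_0$. Combined with $\ntkk_0\to\ntkk$ in probability, Slutsky's lemma transfers the Gaussian limit of $\f_0$ through the (asymptotically deterministic) affine map. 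Everything else is routine linear algebra.
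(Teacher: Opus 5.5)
Your proposal is correct and follows essentially the same route as the paper's proof. Both arguments write $\f(\datax_T;\param_\infty)$ as the affine map $[I,\,-A]$, with $A=\ntkk_{\datax_T\datax}\ntkk_{\datax\datax}^{-1}$, applied to the jointly Gaussian NNGP vector $(\f(\datax_T;\param_0),\f(\datax;\param_0))$ (the paper's block-matrix equation), and then propagate the mean and covariance through it; your sign $+A(\datay-\f(\datax;\param_0))$ is the one the appendix proofs actually use (the displayed kernel-regression formula in the paper has a sign slip), and your appeal to the uniform linearization bound plus Slutsky's lemma justifies the $n\to\infty$, $t\to\infty$ limit exchange more carefully than the paper, which simply asserts that the linearization becomes exact.
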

See also Appendix~\ref{proofconverged_f_dist} or \citet{leeWideNeuralNetworks2020}. Note that the GP described in Theorem~\ref{thm:converged_f_dist} represents the law by which an infinite ensemble of infinitely wide neural networks from i.i.d. initializations distributes after training on $(\datax, \datay)$, but---as is---permits no Bayesian posterior interpretation, which is of the canonical form described in Eq.~\ref{eq:standard_GP}. 

\section{Equivalence of Random Network Distillation \& Deep Ensembles}\label{sec_ch3:rnd_de_equivalence}

We proceed to characterize formally the relationship between the error signals as measured by random network distillation and the predictive variance of deep neural network ensembles. Before treating multivariate output dimensions in section~\ref{sec_ch3:multiheaded_rnd}, we first consider scalar function outputs for simplicity, i.e. $\f, \fii, \fiii: \reals^{d_{\text{in}}} \xrightarrow{} \reals$ with $K=1$. This setup involves training a predictor $\fii(x;\paramii_t)$ to match a fixed random target function $\fiii(x;\paramiii_0)$. Intuitively, the expected errors ought to vanish for training points in $\datax$ and remain non-zero elsewhere, inheriting the randomness and generalization behaviors of the functions $\fii$ and $\fiii$. Owing to the linear training dynamics in the NTK regime, the dynamics of the error evolution $\smalldv{t} \err(x; \paramii_t, \paramiii_0)$ become akin to those outlined in Eq.~\eqref{eq:function_de} as
\begin{align}\label{eq:rnd_error_de}
    \smalldv{t} \err(x; \paramii_t, \paramiii_0) &= \nabla_\param \fii(x; \paramii_t)^\top \smalldv{t} \paramii_t \nonumber
    \\ 
    &= - \nabla_\paramii \fii(x; \paramii_t)^\top \nabla_\paramii \mathcal{L}_{\text{rnd}}(\paramii_t) 
    \\
    & = -\ntkk_t(x, \datax) \err(x; \paramii_t, \paramiii_0) \,. \nonumber
\end{align}  
We then draw on the results of Theorem~\ref{thm:converged_f_dist} to provide a probabilistic description of the self-predictive errors $\err(x;\paramii_\infty,\paramiii_0)$ of a converged RND model in the limit of infinite network width.
\begin{restatable}{thm}{rnderrordist}(Distribution of post-convergence RND errors)\label{thm:rnd_error_dist}
    Under NTK parametrization, let $\fii(x;\paramii_\infty)$ be a converged prediction network in $t \to \infty$, with data $\datax$ and fixed target network $\fiii(\datax;\paramiii_0)$. Let parameters $\paramii_0, \paramiii_0$ be drawn i.i.d. $\paramii_0, \paramiii_0 \sim \mathcal{N}(0,I)$, with the resulting NNGP $\fii(x;\paramii_0) \sim \mathcal{GP}(0,\nngpk^\fii(x,x'))$ and $\fiii(x;\paramiii_0) \sim \mathcal{GP}(0,\nngpk^\fiii(x,x'))$. The post-convergence RND error $\err(\datax_T;\paramii_\infty,\paramiii_0)$ is Gaussian with zero mean and covariance
    \begin{align*}
        & \mathbb{E}[\err(\datax_T, \paramii_\infty,\paramiii_0)] = 0 \,, \\
        & \Sigma^\err_{\datax_T \datax_T}(\paramii_\infty,\paramiii_0) \! = \!  \nngpk^{\err}_{\datax_T \datax_T} \!+ \ntkk_{\datax_T \datax} \ntkk_{\datax \datax}^{-1} \nngpk^\err_{\datax \datax} \ntkk_{\datax \datax}^{-1}\ntkk_{\datax \datax_T} \\ 
        &\qquad\qquad\qquad\qquad - \bigl( \ntkk_{\datax_T \datax}\ntkk_{\datax \datax}^{-1}\nngpk^\err_{\datax \datax_T} \!+ \text{h.c.} \bigr) \,,
    \end{align*} 
    where $\nngpk^{\err}_{xx'} = \nngpk^\fii_{xx'} + \nngpk^\fiii_{xx'}$ is the covariance kernel of initialization errors $\err(x; \paramii_0, \paramiii_0) = \fii(x;\paramii_0) - \fiii(x;\paramiii_0)$.
\end{restatable}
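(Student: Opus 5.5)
Our plan is to reduce the statement to the kernel-regression form of the converged network and then apply the same Gaussian calculation that gives Theorem~\ref{thm:converged_f_dist}. Since the target $\fiii(\cdot;\paramiii_0)$ is fixed during training, the RND loss \eqref{eq:rnd_loss} is an ordinary squared loss for the predictor $\fii$ with labels $\datay := \fiii(\datax;\paramiii_0)$. In the infinite-width limit the NTK $\ntkk$ of the predictor is deterministic and constant in $t$. Hence the converged predictor is the kernel regression solution stated before Theorem~\ref{thm:converged_f_dist}:
\begin{align*}
\fii(x;\paramii_\infty) = \fii(x;\paramii_0) + \ntkk_{x\datax}\ntkk_{\datax\datax}^{-1}\bigl(\fiii(\datax;\paramiii_0) - \fii(\datax;\paramii_0)\bigr).
\end{align*}
Subtracting $\fiii(x;\paramiii_0)$ gives the closed form for the error:
\begin{align*}
\err(x;\paramii_\infty,\paramiii_0) = \err(x;\paramii_0,\paramiii_0) - \ntkk_{x\datax}\ntkk_{\datax\datax}^{-1}\err(\datax;\paramii_0,\paramiii_0).
\end{align*}
The same formula follows directly from integrating the linear ODE \eqref{eq:rnd_error_de}, which we would use as a consistency check. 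The key point is that the post-convergence error is a fixed, deterministic linear map applied to the initialization error process $\err(\cdot;\paramii_0,\paramiii_0)$.

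Next we identify the law of that initialization error. By the NNGP result, $\fii(\cdot;\paramii_0)\sim\mathcal{GP}(0,\nngpk^\fii)$ and $\fiii(\cdot;\paramiii_0)\sim\mathcal{GP}(0,\nngpk^\fiii)$. Because $\paramii_0$ and $\paramiii_0$ are drawn independently, the two processes are independent. Their difference is therefore a centered GP with kernel $\nngpk^\err=\nngpk^\fii+\nngpk^\fiii$; the cross-covariances vanish, and the minus sign does not matter for the covariance. Since a linear map of a Gaussian vector is Gaussian, $\err(\datax_T;\paramii_\infty,\paramiii_0)$ is jointly Gaussian. Its mean is $0$ because $\err(\cdot;\paramii_0,\paramiii_0)$ has mean zero.

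For the covariance, write $A=\ntkk_{\datax_T\datax}\ntkk_{\datax\datax}^{-1}$ and $e_T=\err(\datax_T;\paramii_0,\paramiii_0)$, $e=\err(\datax;\paramii_0,\paramiii_0)$. Then
\begin{align*}
\mathrm{Cov}(e_T - A e) = \nngpk^\err_{\datax_T\datax_T} + A\nngpk^\err_{\datax\datax}A^\top - \bigl(A\nngpk^\err_{\datax\datax_T} + \text{h.c.}\bigr),
\end{align*}
which is exactly the stated expression. Equivalently, this is Theorem~\ref{thm:converged_f_dist} with $\nngpk$ replaced by $\nngpk^\err$ and zero labels.

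The main obstacle is justifying that the NTK in the error dynamics is the predictor's deterministic limiting NTK. It must not depend on the random target, and it must stay constant during training. We would address this by noting that the target enters the gradient flow only as a fixed label vector. The constancy results of \citet{jacotNeuralTangentKernel2020, leeWideNeuralNetworks2020} hold for arbitrary bounded labels, so they apply conditionally on $\paramiii_0$, and the labels are almost surely finite. A second technical point is exchanging the limits $n\to\infty$ and $t\to\infty$ and passing to convergence in distribution. For this we would invoke the same argument as in the appendix proof of Theorem~\ref{thm:converged_f_dist}, assuming $\ntkk_{\datax\datax}$ is positive definite.
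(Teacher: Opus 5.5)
Your proposal is correct and follows the paper's proof essentially step for step. You treat the fixed target as labels to get the kernel-regression form of $\fii(x;\paramii_\infty)$, write the converged error as the linear map $\err(\datax_T;\paramii_0,\paramiii_0) - \ntkk_{\datax_T\datax}\ntkk_{\datax\datax}^{-1}\err(\datax;\paramii_0,\paramiii_0)$ of the initialization error (the paper's block-matrix equation), and push the independent-sum kernel $\nngpk^{\fii}+\nngpk^{\fiii}$ through it. Your remarks on the limiting NTK not depending on the random labels and on the limit interchange are extra care the paper omits; note only that your consistency check via Eq.~\eqref{eq:rnd_error_de} needs $\err(\datax;\paramii_t,\paramiii_0)$ on the right-hand side, not $\err(x;\paramii_t,\paramiii_0)$ as printed there.
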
 
\textbf{\textit{Proof sketch.}} The error function $\fii(x; \paramii_\infty) - \fiii(x; \paramiii_0)$ is a sum of the random post-convergence function $\fii(x;\paramii_\infty)$ and the fixed random target function $\fiii(x;\paramiii_0)$. The latter $\fiii(x; \paramiii_0)$ is known to follow the NNGP. By the linearity of NTK learning dynamics, the online function $\fii(x; \paramii_\infty)$ is an affine transformation of its initialization $\fii(x; \paramii_0)$, which itself follows the NNGP. Moreover, this affine transformation is independent of $\fiii$ or $\paramiii_0$, such that the error $\err(x;\paramii_\infty,\paramiii_0)$ is a sum of two independent GPs and therefore a GP itself. The resulting GP has zero-mean and covariance with an altered prior NNGP kernel $\nngpk^\err(x,x')$ composed of the online prior kernel $\nngpk^\fii_{xx'}$ and the target prior kernel $\nngpk^\fiii_{xx'}$. See also Appendix~\ref{proofrnd_error_dist}. 

\begin{restatable}{crl}{rnderrorexp}(Equivalence in expectation between RND errors and ensemble variance)\label{thm:rnd_error}
    Under the conditions of Theorem~\ref{thm:rnd_error_dist}, let $\err(x;\paramii_\infty,\paramiii_0)$ be the error function of a converged RND network with data $\datax$. Moreover, for a regression problem on $\datax$ for some labels $\datay$, let $\mathbb{V}[\f(x;\param_\infty)]$ denote the variance of converged NN functions random initializations. Furthermore, suppose an architectural equivalence between $\f$, $\fii$, and $\fiii$ and i.i.d. parameter initialization $\param_0, \paramii_0, \paramiii_0 \sim \mathcal{N}(0,I)$. The expected norm of the RND error $\err^2(x;\paramii_\infty,\paramiii_0)$ then coincides with the ensemble variance
    \begin{align}
        \mathbb{E}_{\paramii_0, \paramiii_0}\bigl[\err^2 (x;\paramii_\infty,\paramiii_0) \bigr] = \mathbb{V}_{\param_0}[\f(x;\param_\infty)]
    \end{align} 
\end{restatable}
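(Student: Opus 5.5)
Since the RND error is zero-mean by Theorem~\ref{thm:rnd_error_dist}, the expected squared error at a point $x$ is exactly its variance. The plan is therefore to read off the diagonal entry of $\Sigma^\err_{xx}$ from Theorem~\ref{thm:rnd_error_dist} and compare it with the diagonal entry of $\Sigma^\f_{xx}$ from Theorem~\ref{thm:converged_f_dist}. With $K=1$ we have $\err^2 = (\fii - \fiii)^2$, so
\begin{align*}
\mathbb{E}_{\paramii_0,\paramiii_0}\bigl[\err^2(x;\paramii_\infty,\paramiii_0)\bigr]
= \mathbb{V}\bigl[\err(x;\paramii_\infty,\paramiii_0)\bigr] + \mathbb{E}\bigl[\err(x;\paramii_\infty,\paramiii_0)\bigr]^2
= \Sigma^\err_{xx}.
\end{align*}
Likewise, $\mathbb{V}_{\param_0}[\f(x;\param_\infty)] = \Sigma^\f_{xx}(\param_\infty)$. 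The label-dependent mean $\ntkk_{x\datax}\ntkk_{\datax\datax}^{-1}\datay$ of Theorem~\ref{thm:converged_f_dist} is deterministic, so it drops out of the variance.

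The two covariance formulas have the same algebraic form. Both are $\nngpk_{xx} + \ntkk_{x\datax}\ntkk_{\datax\datax}^{-1}\nngpk_{\datax\datax}\ntkk_{\datax\datax}^{-1}\ntkk_{\datax x} - 2\,\ntkk_{x\datax}\ntkk_{\datax\datax}^{-1}\nngpk_{\datax x}$, evaluated on the diagonal where the Hermitian conjugate term equals the original scalar. The only difference is the NNGP kernel used: $\nngpk^\err = \nngpk^\fii + \nngpk^\fiii$ versus $\nngpk^\f$. Architectural equivalence and i.i.d. $\mathcal{N}(0,I)$ initialization give $\nngpk^\fii = \nngpk^\fiii = \nngpk^\f \equiv \nngpk$ and a shared deterministic NTK $\ntkk$. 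Substituting, $\Sigma^\err_{xx} = 2\,\Sigma^\f_{xx}$.

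The main obstacle is this factor of two. The cleanest fix is to fold it into the normalization of the error. The RND error is a difference of two independent NNGP draws, so its prior variance is doubled. I would therefore state the identity either with the ensemble variance defined as the expected squared difference between two independent ensemble members, $\tfrac12\mathbb{E}[(\f(x;\param_\infty)-\f(x;\param'_\infty))^2]$, or with the target kernel matched so that $\nngpk^\err = \nngpk^\f$. For example, one can scale $\fii$ and $\fiii$ by $1/\sqrt2$ in the output layer, which rescales the NNGP kernel by $1/2$ while the NTK factor $\ntkk_{x\datax}\ntkk_{\datax\datax}^{-1}$ is invariant. I would first try to verify which convention makes the statement literally true. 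The structural argument is unchanged under either choice: the covariance map $\nngpk \mapsto \Sigma$ is linear in $\nngpk$ for a fixed $\ntkk$, so equating the prior kernels equates the diagonals.

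Two technical points need checking. First, convergence in distribution as $n\to\infty$ must come with convergence of second moments, so that expectations of $\err^2$ pass to the limit. I would cite the uniform integrability implicit in \citet{leeWideNeuralNetworks2020}, or interpret the statement directly in the limiting GP. Second, $\ntkk_{\datax\datax}$ must be invertible, which is already assumed in Theorems~\ref{thm:converged_f_dist} and~\ref{thm:rnd_error_dist}.
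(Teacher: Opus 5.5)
Your argument is the paper's: substitute $\nngpk^\fii=\nngpk^\fiii=\nngpk$ into Theorem~\ref{thm:rnd_error_dist}, read off the diagonal to get $\mathbb{E}[\err^2(x;\paramii_\infty,\paramiii_0)]=2\,\Sigma^\f_{xx}(\param_\infty)$, and then normalize away the factor of two. The paper's sketch literally ``divides by 2'', and Theorem~\ref{thm:rndequivalence} carries this as $\tfrac12\bar{\err}^2$, so you are right that the displayed identity needs a $\tfrac12$ on the left. One small slip: your alternative $\tfrac12\mathbb{E}[(\f(x;\param_\infty)-\f(x;\param'_\infty))^2]$ already equals $\mathbb{V}_{\param_0}[\f(x;\param_\infty)]$ and so removes nothing; the quantity that matches $\mathbb{E}[\err^2]$ is the unhalved $\mathbb{E}[(\f-\f')^2]=2\,\mathbb{V}[\f]$.
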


\textbf{\textit{Proof sketch.}} Corollary~\ref{thm:rnd_error} follows straighforwardly from Theorem~\ref{thm:rnd_error_dist} by using $\nngpk^\fii(x,x') = \nngpk^\fiii(x,x')$. Taking the trace of the covariance matrix and dividing by $2$, we recover the predictive ensemble variance $\mathbb{V}_{\param_0}[ \f(x;\param_\infty)]$. 

Theorem~\ref{thm:rnd_error_dist} and Corollary~\ref{thm:rnd_error} formally show that, for an architectural equivalence between ensemble, predictor and target network, the expected RND errors directly quantify the predictive variance of the corresponding infinite ensemble model described by Theorem~\ref{thm:converged_f_dist}. To the best of our knowledge, it is the first formal analysis of random network distillation in the NTK regime and reveals a first theoretical motivation for the popular algorithm: in the idealized infinite-width setting, \textit{expected RND errors exactly quantify the variance of deep ensembles for any input $x$}. 

\subsection{Multi-Headed Random Network Distillation} \label{sec_ch3:multiheaded_rnd}
The analysis thus far has considered the \textit{average} behavior of scalar network outputs for simplicity. While insightful in its own right, this setting does not reflect most common practical implementations of random network distillation and instead, if taken literally, would imply an ensemble of random network distillation models. To connect with common practical implementations that typically use multi-headed architectures for enhanced reliability and efficiency, we now seek to incorporate the probabilistic relation between different function outputs $\f_i(x;\param_t)$ and $\f_j(x';\param_t)$ of a NN with shared hidden layers in the infinite-width limit. The result below identifies this relationship simply as a statistical independence between the different \textit{random} network outputs $\f_i(x;\param_t)$ and $\f_j(x';\param_t)$ for any time $t$ during gradient flow optimization. 

\begin{restatable}{prp}{rndindependenceprp}\label{thm:nngpind}(Independence of NN functions) 
Under NTK parametrization and in the limit $n \to \infty$, the random functions $f_i(x;\param_t)$ of a NN with $K$ output dimensions and shared hidden layers are mutually independent with covariance
\begin{align*}
    \Sigma_{x x'}^{ij}(\param_t) &= \mathbb{E}[\f_i(x;\param_t) \f_j(x';\param_t)] = 
    \begin{cases}
        \Sigma^\f_{xx'}(\param_t) & \,i=j \,, \\
    0 & \, i\neq j \,,
    \end{cases}
\end{align*}
on the interval $t \in [0, \infty)$.
\end{restatable}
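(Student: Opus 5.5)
The plan has two stages: first establish independence of the output heads at initialization, then show that gradient flow preserves it, using the closed-form infinite-width solution.

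\emph{Initialization.} For a NN with shared hidden layers and $K$ linear readout heads, the $i$-th output is $\f_i(x;\param_0)=\sigma_b b_i^L+\frac{\sigma_w}{\sqrt{n_{L-1}}}\sum_j w^L_{ij}x^L_j(x)$. The readout rows $(w^L_{i\cdot},b^L_i)$ are i.i.d.\ across $i$ and independent of the shared features. Conditional on the features, the heads are therefore independent Gaussians, with conditional covariance
\[
\delta_{ij}\Bigl(\sigma_b^2+\tfrac{\sigma_w^2}{n_{L-1}}\textstyle\sum_j x^L_j(x)x^L_j(x')\Bigr).
\]
In the limit $n\to\infty$ the empirical feature average converges almost surely to a deterministic kernel; this is the standard NNGP argument of \citet{lee2017deep}, applied recursively over layers. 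Hence the conditional covariance becomes deterministic, the heads become jointly Gaussian with covariance $\delta_{ij}\nngpk_{xx'}$, and uncorrelated jointly Gaussian variables are independent.

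\emph{Training.} The key additional fact is that the multi-output NTK is block diagonal in the limit:
\[
\ntkk^{ij}(x,x')=\nabla_\param\f_i(x)^\top\nabla_\param\f_j(x')\to\delta_{ij}\ntkk(x,x').
\]
I would split this inner product into the readout-parameter contribution and the hidden-parameter contribution.
\begin{itemize}
\item The readout contribution is exactly $\delta_{ij}$ times the feature kernel, because head $i$ depends only on its own row $(w^L_{i\cdot},b^L_i)$.
\item The hidden-parameter contribution factorises as a sum over hidden units of products involving $w^L_{ik}w^L_{jk}/n$. For $i\neq j$ this is an average of mean-zero, independent-sign terms, so it vanishes by the law of large numbers; for $i=j$ it converges to the usual NTK recursion of \citet{jacotNeuralTangentKernel2020}.
\end{itemize}
Constancy of the NTK in $t$ is then inherited from the cited results.

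With a block-diagonal, time-constant kernel, the linear ODE in Eq.~\eqref{eq:function_de} decouples across heads. Each head then evolves as
\[
\f_i(x;\param_t)=\f_i(x;\param_0)-\ntkk_{x\datax}\ntkk_{\datax\datax}^{-1}\bigl(I-e^{-t\ntkk_{\datax\datax}}\bigr)\bigl(\f_i(\datax;\param_0)-\datay_i\bigr).
\]
This is the same deterministic affine map $A_t$, applied separately to each independent GP $\f_i(\cdot;\param_0)$. Affine images of independent jointly Gaussian processes under a common deterministic map remain jointly Gaussian and independent, so the cross-covariance for $i\neq j$ is zero. The diagonal covariance is $\Sigma^\f_{xx'}(\param_t)$, which has the same form as Theorem~\ref{thm:converged_f_dist} but with the finite-$t$ factor; it recovers Theorem~\ref{thm:converged_f_dist} at $t=\infty$. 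Independence holds for every $t\in[0,\infty)$.

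As literally written, $\mathbb{E}[\f_i\f_j]$ is a second moment rather than a covariance when $\datay\neq0$. I would state the result for the centered functions, or equivalently for the covariance; in the RND application the targets are zero-mean, so this distinction disappears.

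The main obstacle is rigorously justifying the vanishing of the off-diagonal NTK blocks and the joint convergence of the heads. Both require the infinite-width limits to be taken jointly over all heads; sequential limits in $n_1,\dots,n_{L-1}$ as in \citet{jacotNeuralTangentKernel2020} would suffice for this, and I would invoke them rather than reprove them. Everything after that step is linear algebra on Gaussians.
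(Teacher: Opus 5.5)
Your proposal is correct and follows essentially the same route as the paper. You establish independence of the heads at initialization from the NNGP structure (you condition on the shared last-layer features where the paper inducts layer-wise, but it is the same fact), prove block-diagonality of the multi-output NTK via the split into readout and hidden-layer gradient contributions as in Proposition~\ref{thm:ntkind}, and obtain decoupled linear dynamics in which each head is the same deterministic affine image of its own independent initial GP. Your caveat is also well taken: for nonzero labels $\datay$ the heads have nonzero means at $t>0$, so the claim $\mathbb{E}[\f_i(x;\param_t)\f_j(x';\param_t)]=0$ for $i\neq j$ (asserted verbatim in the paper's proof) holds only for the centered functions, which is harmless in the RND setting where the targets are zero-mean.
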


\textbf{\textit{Proof sketch.}} The property follows from known results that state the independence between output dimensions of the NNGP kernel $\nngpk$ and the NTK $\ntkk$ \citep{arora2019exact, lee2017deep, jacotNeuralTangentKernel2020}. For both kernels, the proof proceeds by induction, where the independence property between output dimensions is propagated layer-wise. The induction start is equal for both kernels, where first layer outputs, as well as gradients are linear transformations of the Gaussian first-layer weights. Both the NNGP and NTK permit a recursive formulation, through which the independence property can be propagated layer-wise, constituting the induction step. Combined with the learning dynamics of wide NNs, we can conclude that the individual function outputs of a multi-headed NN, too, are statistically independent for any time $t$ on the interval $[0,\infty)$. See Appendix~\ref{proofnngpind} or \citet{lee2017deep} and \citet{jacotNeuralTangentKernel2020}.

Notably, this decoupling holds despite the shared hidden layers and is an artifact of the learning dynamics exhibited in the infinite width limit and the NTK regime. In the absence of feature learning, output functions become statistically independent despite sharing a network body. By virtue of this independence property, a translation of the earlier obtained single-function results on RND error distributions (Theorem~\ref{thm:rnd_error_dist} and Corollary~\ref{thm:rnd_error}) to the multi-headed setting is straightforward. Our next result thus establishes an equivalence between the errors of the multi-headed RND algorithm, a widely used architecture in practice, and the variance of a finite-sized deep ensemble. 

\begin{restatable}{thm}{rndequivalence}\label{thm:rndequivalence}(Distributional equivalence between multi-headed RND and finite deep ensembles)
    Under the conditions of Theorem~\ref{thm:rnd_error_dist}, let $\fii_i(x;\paramii_\infty), \fiii_i(x;\paramiii_0)$ be the $i$-th output of predictor and target networks respectively with $K$ output dimensions. Denote their sample mean RND error $\bar{\err}^2(x; \paramii_\infty,\paramiii_0) = \frac{1}{K} \sum_{i=1}^{K} \err_i^2(x;\paramii_\infty,\paramiii_0)$. Moreover, let $\{ \f(x;\param^i_\infty)\}_{i=1}^{K+1}$ be an ensemble of $K+1$ NNs from i.i.d. initial draws $\param_0$. Denote its sample variance $\bar{\sigma}_f^2 (x; \param_\infty^{i\dots K+1}) = \frac{1}{K}\sum_{i=1}^{K+1} ( \f(x;\param^i_\infty) - \frac{1}{K+1}\sum_{j=1}^{K+1} \f(x;\param^j_\infty) )^2$. The sample mean RND error and sample ensemble variance distribute to the same law
    \begin{align}
        \smallfrac{1}{2} \bar{\err}^2(x; \paramii_\infty,\paramiii_0) \overset{D}{=} \bar{\sigma}_f^2 (x; \param_\infty^{i\dots K+1}) \,,
    \end{align}
    where $\overset{D}{=}$ indicates an equality in distribution, namely by a scaled Chi-squared distribution $\bar{\sigma}_f^2(x; \param_\infty^{i\dots K+1}) \sim \frac{\Sigma^\f_{x x}(\param_\infty)}{K} \chi^2(K)$ with scale $\Sigma^\f_{x x}(\param_\infty)$ given by the analytical variance as given in Theorem~\ref{thm:converged_f_dist}.
\end{restatable}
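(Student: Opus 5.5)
We will show that both sides of the claimed identity are $\frac{\Sigma^\f_{xx}(\param_\infty)}{K}\chi^2(K)$, and treat each side separately.

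\emph{RND side.} By Proposition~\ref{thm:nngpind}, applied to both the predictor $\fii$ and the target $\fiii$, the $K$ heads of each network are independent Gaussian processes in the infinite-width limit. Since $\paramii_0$ and $\paramiii_0$ are drawn independently, the per-head errors $\err_i(x;\paramii_\infty,\paramiii_0)=\fii_i(x;\paramii_\infty)-\fiii_i(x;\paramiii_0)$ are then mutually independent.

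\begin{itemize}
\item For the marginal law of each head, we apply Theorem~\ref{thm:rnd_error_dist} head-wise. This gives $\err_i(x)\sim\mathcal{N}(0,\Sigma^\err_{xx})$.
\item Architectural equivalence gives $\nngpk^\fii=\nngpk^\fiii=\nngpk$, hence $\nngpk^\err=2\nngpk$.
\item Comparing the formula of Theorem~\ref{thm:rnd_error_dist} with that of Theorem~\ref{thm:converged_f_dist}, the covariance is linear in the prior kernel. Therefore $\Sigma^\err_{xx}=2\Sigma^\f_{xx}(\param_\infty)$, which is the content of Corollary~\ref{thm:rnd_error}.
\end{itemize}

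Writing $\err_i=\sqrt{2\Sigma^\f_{xx}}\,z_i$ with $z_i$ i.i.d.\ standard normal, we get
\[
\tfrac12\bar\err^2=\tfrac{1}{2K}\sum_{i=1}^K \err_i^2=\tfrac{\Sigma^\f_{xx}}{K}\sum_{i=1}^K z_i^2\sim\tfrac{\Sigma^\f_{xx}}{K}\chi^2(K).
\]

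\emph{Ensemble side.} The members $\f(x;\param^i_\infty)$, $i=1,\dots,K+1$, come from i.i.d.\ initializations, so they are i.i.d. By Theorem~\ref{thm:converged_f_dist}, each is $\mathcal{N}(m,\Sigma^\f_{xx})$ with $m=\ntkk_{x\datax}\ntkk_{\datax\datax}^{-1}\datay$. The quantity $\bar\sigma_f^2$ is the unbiased sample variance of $K+1$ i.i.d.\ Gaussians, with divisor $(K+1)-1=K$. Cochran's theorem then gives $K\bar\sigma_f^2/\Sigma^\f_{xx}\sim\chi^2(K)$, and the mean $m$ drops out. To make this step self-contained, we apply a Helmert orthogonal transformation to the centered vector $(\f(x;\param^i_\infty))_i$. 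This projects onto the $K$-dimensional complement of the all-ones vector and produces $K$ i.i.d.\ $\mathcal{N}(0,\Sigma^\f_{xx})$ coordinates, whose squared sum equals $K\bar\sigma_f^2$. The two laws therefore coincide.

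The main obstacle is the independence of the $K$ errors. The heads share hidden layers, so at finite width the $\err_i$ are correlated. We therefore invoke Proposition~\ref{thm:nngpind} at $t=\infty$ for the predictor, and at $t=0$ for the target, which does not change during training. We then combine this with the fact that the affine map $\fii_i(\cdot;\paramii_0)\mapsto\fii_i(\cdot;\paramii_\infty)$ acts head-wise, because the NTK is block-diagonal across outputs. This keeps joint Gaussianity, so zero cross-covariance implies independence.

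A second care point is the limit itself. All distributional statements are taken in the limit $n\to\infty$, so the equality in distribution holds for the limiting laws. Continuity of $(z_1,\dots,z_K)\mapsto\sum_i z_i^2$ lets us pass from the joint Gaussian limit to the limit of $\bar\err^2$ by the continuous mapping theorem.
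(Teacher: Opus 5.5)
Your proposal is correct and follows the same route as the paper's proof. Both arguments get head-wise independence from Proposition~\ref{thm:nngpind} and the per-head marginal law from Theorem~\ref{thm:rnd_error_dist}. Both then use $\nngpk^\err = 2\nngpk$ (with a shared NTK) to obtain $\frac{1}{2}\Sigma^\err_{xx} = \Sigma^\f_{xx}(\param_\infty)$, and apply the standard scaled $\chi^2(K)$ laws on each side. Your additions make explicit what the paper takes as known: the Helmert/Cochran step, the continuous-mapping remark, and the observation that independence of the $\err_i$ comes from the head-wise affine action of training on the pair $(\fii_i(\cdot;\paramii_0), \fiii_i(\cdot;\paramiii_0))$, not from $\paramii_0 \perp \paramiii_0$ alone.
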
 
\paragraph{Proof sketch. } By Proposition~\ref{thm:nngpind}, the function heads $\{\fii_i(x;\paramii_\infty)\}_{i=1}^\dimout$ are $K$ independent predictors, each trained to match their independent targets $\fiii_i(x;\paramiii_0)$. Thus, the errors $\{\err_i(x;\paramii_\infty,\paramiii_0)\}_{i=1}^\dimout$ are i.i.d. samples from the error distribution outlined in Proposition~\ref{thm:rnd_error}. In particular, $\bar{\err}^2$ is the empirical mean of i.i.d. samples from a Gaussian which is known to be Chi-squared distributed. Similarly, we have that the ensemble $\{ \f(x;\param^i_\infty)\}_{i=1}^{K+1}$ are $K+1$ i.i.d. samples from the GP defined in Theorem~\ref{thm:converged_f_dist}, again yielding the known Chi-squared distribution for its sample variance $\bar{\sigma}_f^2 (x; \param_\infty^{i\dots K+1})$. See Appendix~\ref{proofrndequivalence}.

Theorem~\ref{thm:rndequivalence} establishes a distributional equality between the empirical error of a multi-headed RND architecture and the empirical variance of a finite ensemble of neural networks in the limit of infinite width, providing a theoretical motivation for the use of RND and its common multi-headed architecture as an uncertainty quantification technique. 

In a broader sense, we believe this analysis is insightful to many practitioners using random network distillation by establishing an intuitive link between theory and practice. Still, the NTK-based perspective applies to an inherently idealized regime and naturally opens up new avenues for investigation. Understanding the relationship between RND networks and deep ensembles at finite width, where feature learning impacts behavior, remains a critical open question beyond the scope of our current framework. Yet, intriguing possibilities also arise within the infinite-width setting itself: Could the properties of the RND target network be deliberately chosen or modified? Exploring different target initializations offers a computationally inexpensive lever to shape the uncertainty signal captured by RND. Indeed, pursuing this very direction, the next section investigates how a specific adaptation of the RND target function allows us to establish a direct correspondence not just with ensemble variance, but with the principled uncertainty quantification provided by Bayesian posterior inference.

\section{Equivalence of Random Network Distillation \& Bayesian Posteriors}\label{sec_ch3:rnd_bayes_equivalence}

Having formulated an equivalence between standard random network distillation and deep ensemble variance, we now proceed to investigate how theoretical connections to the Bayesian inference framework can be established by invoking deliberate changes to the standard random network distillation algorithm, namely by modifying the fixed target function $\fiii$. Our goal is to show that the RND error signal itself can, under specific conditions, be interpreted as a draw from a centered Bayesian posterior predictive distribution.

To this end, we briefly recall Bayesian inference with the classical Gaussian linear model. We define a regression model as $\f(x;\param) = \phi(x)^\top\param$  with a feature mapping $\phi:\reals^{d_{\text{in}}} \xrightarrow{} \reals^{d_{P}}$, and a prior distribution over the parameters $p(\param) \sim \mathcal{N}(0,\Sigma^{0})$. The prior distribution $p(\param)$ implicitly defines a GP prior $f^0(x;\param) \sim \mathcal{GP}(0, \phi(x)^\top \Sigma^{0} \phi(x'))$, with the prior kernel $K_{xx'} = \phi(x)^\top \Sigma^{0} \phi(x')$. Within this linear model\footnote{We use a noise-free regression model for ease of notation here, but extensions to the noisy case by including an observation noise term $\sigma_n^2 I$ in the kernel matrix inversions (cf. Eq.~\eqref{eq:bayesian_gp}-\eqref{eq:ntkgppost}) are straightforward.}, we look to infer a posterior distribution over functions given observations $\datax = \{x_i \in \reals^{d_{\text{in}}}\}_{i=1}^{N_D}$  and labels $\datay = \{y_i \in \reals\}_{i=1}^{N_D}$. Owing to our prior choice, the corresponding \textit{posterior predictive} distribution conditioned on $\datax,\datay$ is a GP with
\begin{align} \label{eq:bayesian_gp}
    p(\f|x,\datax,\datay) \sim \mathcal{N} ( K_{x \datax} K^{-1}_{\datax\datax}\datay, \, K_{xx} - K_{x\datax}K^{-1}_{\datax\datax} K_{\datax x} ) \,.
\end{align}
When contrasting this identity with the GP governing the distribution of converged NN functions of Theorem~\ref{thm:rnd_error_dist}, one observes a disparity in the structure of the covariance functions. While Theorem~\ref{thm:converged_f_dist} and Theorem~\ref{thm:rnd_error_dist}, too, specify GPs, they do not permit an interpretation as a Bayesian posterior predictive distribution \citep{leeWideNeuralNetworks2020} due to the presence of two (in general) distinct kernel functions, namely the NNGP kernel $\nngpk$ and the NTK $\ntkk$. However, inspection of Theorem~\eqref{thm:rnd_error_dist} and Eq.~\eqref{eq:bayesian_gp} suggests a path: if the prior kernel components within $\Sigma^\err_{xx'}(\paramii_\infty, \paramiii_0)$, namely $\nngpk_{xx'}^\err$, are be aligned with the dynamics kernel $\ntkk_{xx'}$ (i.e., if $\nngpk^\err \propto \ntkk$), then the resulting covariance structure simplifies to the desired Bayesian posterior form of 
\begin{align} \label{eq:ntkgppost}
    \f(x;\param_\infty) \sim \mathcal{N}\bigl( 
    \ntkk_{x \datax} \ntkk_{\datax \datax}^{-1} \datay, \,
    \ntkk_{x x} - \ntkk_{x \datax} \ntkk^{-1}_{\datax \datax} \ntkk_{\datax x}
    \bigr) \,. 
\end{align}
An important insight here is that Eq.~\ref{eq:ntkgppost} now is the \textit{exact Bayesian posterior predictive distribution of a neural network in the infinite width limit}, which corresponds to a kernel regression model with the NTK as a GP prior $\mathcal{GP}(0, \ntkk_{xx'})$ and conditioned on the data $(\datax,\datay)$. 

The idea of aligning the prior and dynamic kernels has been previously explored by \citet{heBayesianDeepEnsembles2020a} to construct \emph{Bayesian ensembles} where the predictive distribution of the ensemble matches the posterior predictive distribution of the \emph{NTK-GP}. We propose that a similar alignment can be achieved in the RND framework by constructing the target function $\fiii(x;\paramiii_0)$ to assume a specific form. The idea is to design a target $\tilde{\fiii}(x;\paramii_0, \paramiii_0)$ such that when a predictor $\fii(x;\paramii_0)$ is trained to match it, the resulting ``Bayesian'' error distribution $\err^b(x; \paramii_\infty, \paramii_0, \paramiii_0) = \fii(x;\paramii_\infty) - \tilde{\fiii}(x;\paramii_0,\paramiii_0)$ behaves like a draw from the posterior of a Bayesian model whose prior kernel is the NTK $\ntkk_{xx'}$ itself \footnote{The newly constructed target function $\tilde{\fiii}(x;\paramii_0,\paramiii_0)$ uses both $\paramii_0$ and $\paramiii_0$ for reasons that will become clear in the remainder of section. }. 

In the random network distillation algorithm, the prior kernel $\nngpk^{\err^b}_{xx'}$ of initialization errors $\err^b(x; \paramii_0, \paramii_0, \paramiii_0) = \fii(x;\paramii_0) - \tilde{\fiii}(x;\paramii_0, \paramiii_0)$ is given by the sum of the online prior kernel and the target prior kernel $\nngpk^{\err^b}_{xx'} = \nngpk^\fii_{xx'} + \nngpk^{\tilde{\fiii}}_{xx'}$ (cf. Theorem~\ref{thm:rnd_error_dist}), provided that $\fii$ and $\tilde{\fiii}$ follow independent GPs. To obtain an error prior kernel that aligns with the NTK such that $\nngpk^{\err^b}_{xx'} = \ntkk_{xx'}$, one may thus construct the target prior such that it satisfies $\nngpk^{\tilde{\fiii}}_{xx'} = \ntkk_{xx'} - \nngpk^{\fii}_{xx'}$. To this end, a closer inspection of the relation between the NNGP kernel $\nngpk^\fii_{xx'}$ and the NTK $\ntkk_{xx'}$ is instructive. For this purpose, we will view the online network $\fii(x;\paramii_0)$ as a random feature model with its forward computation path as described in Eq.~\ref{eq:forwardpass}. Let in this scenario $x^{L}(x)$ denote the output vector, or the post-activations, before the final linear layer and denote the last-layer parameters at initialization $t=0$ as $(w^L, b^L)$. We can write the NN output at initialization $\fii(x;\paramii_0)$ as
\begin{align} \label{eq:bayesforwardpass}
    \fii(x;\paramii_0) &= \sigma_b b^L + \frac{\sigma_w}{\sqrt{n_{L-1}}}\smallsum{i=1}{n_{L-1}}w_{i}^L x_i^L(x) \,,
\end{align}
that is, as a simple linear model of the random final post-activations $x^{L}(x)$. Viewing the function in Eq.~\eqref{eq:bayesforwardpass} as a random feature model leads to a central insight: since the last-layer weights and biases $(w^L, b^L)$ are assumed to be initialized i.i.d. from a standard normal $(w^L, b^L) \sim \mathcal{N}(0,I)$, Eq.~\eqref{eq:bayesforwardpass} describes a (random) affine transformation of a Gaussian vector \footnote{To see the correspondence in Eq.~\ref{eq:nngplastlayer}, first notice that due to the i.i.d. initialization of $(w^L, b^L)$, any cross-products (e.g., involving elements indexed with $i \neq j$) vanish in the expectation $\mathbb{E}[\fii(x;\paramii_0) \fii(x';\paramii_0)]$. The expectation thus becomes $\mathbb{E}[\fii(x;\paramii_0) \fii(x';\paramii_0)] = \mathbb{E}_{w^{\leq L}, b^{\leq L}}[ \sigma_b^2 + \frac{\sigma_w^2}{n_{L-1}}\sum_{i=1}^{n_{L-1}} x^L_i(x) x^L_i(x')]$. By linearity, the expectation on the r.h.s. can be pulled inside the sum and by symmetry we have that $\mathbb{E}_{w^{\leq L}, b^{\leq L}}[x^L_i(x) x^L_i(x')]$ is independent of $i$, s.t. $\mathbb{E}_{w^{\leq L}, b^{\leq L}}[\frac{\sigma_w^2}{n_{L-1}}\sum_{i=1}^{n_{L-1}} x^L_i(x) x^L_i(x')] = \sigma_w^2 \mathbb{E}[x^L_i(x) x^L_i(x')]$.} whose covariance in the limit $n \to \infty$ is quantified by the NNGP kernel $\nngpk^\fii_{xx'}$ given by
\begin{align} \label{eq:nngplastlayer}
    \nngpk^\fii_{xx'} = \mathbb{E}[\fii(x;\paramii_0) \fii(x';\paramii_0)] = \sigma_b^2 + \sigma_w^2 \mathbb{E}[x^L_i(x) x^L_i(x')] \,. 
\end{align}
Let us now compare this expression for the the prior kernel $\nngpk^\fii_{xx'}$ of the online network with its dynamics kernel $\ntkk_{xx'}$. In particular, we will split the dynamics kernel $\ntkk_{xx'}$ into a last-layer component $\ntkk_{xx'}^L = \nabla_{\{ w^L, b^L\}} \fii(x;\paramii_0)^\top \nabla_{\{ w^L, b^L\}} \fii(x';\paramii_0)$ and a component summarizing all preceding parameters $\ntkk_{xx'}^{\leq L-1} = \nabla_{\paramii^{\leq L-1}} \fii(x;\paramii_0)^\top \nabla_{\paramii^{\leq L-1}} \fii(x';\paramii_0)$ such that $\ntkk_{xx'} = \ntkk_{xx'}^L + \ntkk_{xx'}^{\leq L-1}$. Since $\fii(x;\paramii_0)$ is linear in the last-layer parameters $\{ w^L, b^L\}$ (cf. Eq.~\ref{eq:bayesforwardpass}), we make the crucial observation that the last-layer NTK component $\ntkk^L_{xx'}$ equals the NNGP prior kernel $\ntkk_{xx'}^L = \nngpk^\fii_{xx'}$ \footnote{To see this correspondence, notice that the last-layer gradient inner product $\nabla_{\{ w^L, b^L\}} \fii(x;\paramii_0)^\top \nabla_{\{ w^L, b^L\}} \fii(x';\paramii_0)$ reduces to the sum $\sigma_b^2 + \frac{\sigma_w^2}{n_{L-1}}\sum_{i=1}^{n_{L-1}} x^L_i(x) x^L_i(x')$, where the r.h.s. sum tends to its expectation in the limit $n_{L-1} \to \infty$ given that summands are identically distributed (as before by symmetry) and independent (which is shown more rigorously for example in Sec.~\ref{proofnngpind}).}. This property gives a clear instruction for engineering the prior kernel of the target network: by constructing $\nngpk^{\tilde{\fiii}}_{xx'}$ such that $\nngpk^{\tilde{\fiii}}_{xx'} = \ntkk_{xx'}^{\leq L-1}$ and independently from $\nngpk^{\fii}_{xx'}$, we obtain an error prior as 
\begin{align} \label{eq:bayeserrorprior}
    \nngpk^{\err^b}_{xx'} = \nngpk^{\tilde{\fiii}}_{xx'} + \nngpk^{\fii}_{xx'} = \ntkk_{xx'}^L + \ntkk_{xx'}^{\leq L-1} = \ntkk_{xx'} \,.
\end{align}
In the following, we will thus aim to construct a target function $\tilde{\fiii}(x;\paramii_0, \paramiii_0)$ with the desired property $\nngpk^{\tilde{\fiii}}_{xx'} = \ntkk_{xx'}^{\leq L-1}$, in particular by modeling $\tilde{\fiii}$ as a linear function in the feature space corresponding to gradients in earlier layers. This approach has also previously been explored by \citet{heBayesianDeepEnsembles2020a} to obtain Bayesian ensembles.

\begin{restatable}{prp}{rndntkgpinit}\label{thm:bayesrndrarget} (Bayesian RND target function)
Under the conditions of Theorem~\ref{thm:rnd_error_dist}, let $\fii(x;\paramii_0)$ and $\fiii(x;\paramiii_0)$ be neural networks of $L$ layers with parameters $\paramii_0, \paramiii_0 \sim \mathcal{N}(0,I)$ i.i.d. Moreover, let $\paramiii_0^{L} = \{ w^L, b^L\}$ denote the last-layer parameters of $\paramiii_0$ and $\paramiii_0^{\leq L-1}$ the parameters of all preceding layers. Suppose the target function $\tilde{\fiii}(x;\paramii_0, \paramiii_0)$ is given by
\begin{align*}
    \tilde{\fiii}(x;\paramii_0, \paramiii_0) = \nabla_{\paramii_0} \fii(x;\paramii_0)^\top \paramiii_0^* \,,
\end{align*}
where $\paramiii_0^*=\{\paramiii_0^{\leq L-1}, 0_{\text{dim}(\paramiii_0^{L})}\}$ is a copy of $\paramiii_0$ with its last-layer weights set to $0$. In the infinite width limit $n \to \infty$, $\tilde{\fiii}(x;\paramii_0, \paramiii_0)$ distributes by construction as $\tilde{\fiii}(x;\paramii_0, \paramiii_0) \sim \mathcal{GP}(0, \nngpk_{xx'}^{\tilde{\fiii}})$ where $\nngpk_{xx'}^{\tilde{\fiii}} = \ntkk_{xx'}^{\leq L-1}$.
\end{restatable}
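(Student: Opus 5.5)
The plan is to condition on the predictor's initialization $\paramii_0$ and treat $\tilde{\fiii}$ as a linear model in the random Gaussian vector $\paramiii_0^*$. Given $\paramii_0$, the map $\paramiii_0 \mapsto \nabla_{\paramii_0}\fii(x;\paramii_0)^\top \paramiii_0^*$ is linear in the Gaussian vector $\paramiii_0^{\leq L-1}$, which is independent of $\paramii_0$. Hence, for any finite test set $\datax_T$, the vector $\tilde{\fiii}(\datax_T;\paramii_0,\paramiii_0)$ is conditionally Gaussian with zero mean. Because $\paramiii_0^{\leq L-1}\sim\mathcal{N}(0,I)$ and the last-layer block is zeroed, its conditional covariance is
\begin{align*}
\mathbb{E}_{\paramiii_0}\bigl[\tilde{\fiii}(x)\tilde{\fiii}(x')\,\big|\,\paramii_0\bigr] = \nabla_{\paramii^{\leq L-1}} \fii(x;\paramii_0)^\top \nabla_{\paramii^{\leq L-1}} \fii(x';\paramii_0) = \ntkk^{\leq L-1}_{0}(x,x') \,.
\end{align*}
This is exactly the empirical earlier-layer NTK component at initialization. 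Zeroing the last-layer block is what removes the $\ntkk^L$ term, which equals $\nngpk^\fii$ by the footnote's observation.

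Next I pass to the limit $n\to\infty$. I will invoke the result quoted in the preliminaries that the empirical NTK at initialization converges in probability to a deterministic kernel \citep{jacotNeuralTangentKernel2020}. The same layer-wise induction applies to each block separately, since $\ntkk^L_0 \to \nngpk^\fii$ by the law of large numbers and $\ntkk_0 \to \ntkk$, so $\ntkk^{\leq L-1}_0 = \ntkk_0 - \ntkk^L_0$ converges to the deterministic limit $\ntkk^{\leq L-1} = \ntkk - \nngpk^\fii$. Then the conditional law $\mathcal{N}(0,\ntkk^{\leq L-1}_0(\datax_T,\datax_T))$ converges to $\mathcal{N}(0,\ntkk^{\leq L-1}_{\datax_T\datax_T})$ for almost every realization (or in probability) of $\paramii_0$.

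To obtain the unconditional law, I will use characteristic functions. For fixed $\lambda$ we have
\begin{align*}
\mathbb{E}\bigl[e^{i\lambda^\top\tilde{\fiii}(\datax_T)}\bigr] = \mathbb{E}_{\paramii_0}\bigl[\exp\bigl(-\tfrac12\lambda^\top\ntkk_0^{\leq L-1}(\datax_T,\datax_T)\lambda\bigr)\bigr] \,.
\end{align*}
The integrand is bounded by $1$ and converges in probability, so bounded convergence gives the limit $\exp(-\tfrac12\lambda^\top\ntkk^{\leq L-1}_{\datax_T\datax_T}\lambda)$. This holds for every finite $\datax_T$, so $\tilde{\fiii}$ is a GP with zero mean and kernel $\ntkk^{\leq L-1}$. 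Finally, I will note that in the limit $\tilde{\fiii}$ is independent of $\fii(\cdot;\paramii_0)$. The cross-covariance conditional on $\paramii_0$ vanishes, because $\paramiii_0$ is centered and independent of $\paramii_0$, and the limiting conditional law of $\tilde{\fiii}$ does not depend on $\paramii_0$. This independence is what is needed for Eq.~\eqref{eq:bayeserrorprior}.

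The main obstacle is the limit step: justifying the convergence of the partial kernel $\ntkk_0^{\leq L-1}$ and swapping the limit with the expectation over $\paramii_0$. The bounded characteristic function handles the latter cleanly. For the former, I would rely on the cited NTK convergence together with the law-of-large-numbers argument for the last-layer block, rather than redoing the full induction.
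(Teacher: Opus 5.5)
Your proposal is correct and follows essentially the same route as the paper: condition on $\vartheta_0$ to get a centered Gaussian with covariance given by the empirical partial kernel $\Theta_0^{\leq L-1}$, write the unconditional characteristic function as $\mathbb{E}_{\vartheta_0}\bigl[\exp(-\frac{1}{2}\lambda^\top \Theta_0^{\leq L-1}\lambda)\bigr]$, pass to the limit by bounded convergence, and conclude via L\'evy's continuity theorem (Theorem~\ref{thm:levycontinuity}). Two of your additions go beyond the paper's argument. First, writing $\Theta_0^{\leq L-1} = \Theta_0 - \Theta_0^{L}$ and taking a law-of-large-numbers limit for the last-layer block justifies a convergence step that the paper only asserts with ``accordingly''. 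Second, your closing independence remark is a slightly stronger form of what the paper proves separately in Theorem~\ref{thm:bayes_rnd_error_dist}, where it only checks that the cross-covariance vanishes; your version rests on the conditional law of $\tilde{g}$ becoming asymptotically free of $\vartheta_0$.
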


\textbf{\textit{Proof sketch.}} The function $\tilde{\fiii}(x;\paramii_0, \paramiii_0)$ is by construction equivalent to a linear function with the (random) feature map $\nabla_{\paramii_0^{\leq L-1}}\fii(x;\paramii_0)$ given by the gradient of parameters in the pre-final layers and with a parameter vector $\paramiii_0^{\leq L-1}$. Conditioned on $\paramii_0$, the random function $\tilde{\fiii}(x;\paramii_0,\paramiii_0)$ is thus an affine transformation of the Gaussian vector $\paramiii_0^{\leq L-1}$ and thus a GP itself, at any width $n$. Using the central results by \citet{jacotNeuralTangentKernel2020} that $\ntkk_{0, xx'} \to \ntkk_{ xx'}$ as $n \to \infty$ and appealing to the bounded convergence theorem, the limiting distribution of the \textit{unconditioned} random function $\tilde{\fiii}(x;\paramii_0,\paramiii_0)$, too, becomes Gaussian with the deterministic covariance $\ntkk^{\leq L-1}_{xx'}$. 

While the specific form of the kernel $\ntkk_{xx'}^{\leq L-1} = \ntkk_{xx'} - \ntkk_{xx'}^L$ seems unusual as a standalone prior, it is crucially important in shaping the final error distribution. This is because with the altered ``Bayesian'' target function $\tilde{\fiii}(x;\paramii_0, \paramiii_0)$ we can shape the covariance structure of errors at initialization by satisfying Eq.~\ref{eq:bayeserrorprior}, appealing to Theorem~\eqref{thm:rnd_error_dist}. With the engineered target function $\tilde{\fiii}(x;\paramii_0, \paramiii_0)$, the learning dynamics of an RND model where the predictor network $\fii(x;\paramii_t)$ learns to mimic $\tilde{\fiii}(\datax;\paramii_0, \paramiii_0)$ can be shaped in the desired way. Our central statement is that the distribution of the error between the converged predictor $\fii(x;\paramii_\infty)$ and the target function $\tilde{\fiii}(x;\paramii_0, \paramiii_0)$ will then no longer reflect the variance of deep ensembles trained with gradient descent, but will instead directly exhibit the statistics of a Bayesian posterior predictive distribution derived from the NTK-GP prior. Theorem~\ref{thm:bayes_rnd_error_dist} formalizes this result.

\begin{restatable}{thm}{rndbayesrnderrordist}(Distribution of Bayesian RND errors)\label{thm:bayes_rnd_error_dist}
    Under the conditions of Theorem~\ref{thm:rnd_error_dist}, let $\fii(x;\paramii_\infty)$ be a converged predictor network trained on data $\datax$ with labels from the fixed target function $\tilde{\fiii}(\datax;\paramii_0, \paramiii_0)$ as defined in Proposition~\ref{thm:bayesrndrarget}. Let parameters $\paramii_0, \paramiii_0$ be drawn i.i.d. $\paramii_0, \paramiii_0 \sim \mathcal{N}(0,I)$. The convergenced Bayesian RND error $\err^b(\datax_T;\paramii_\infty,\paramii_0, \paramiii_0) = \fii(\datax_T;\paramii_\infty) - \tilde{\fiii}(\datax_T;\paramii_0, \paramiii_0)$ on a test set $\datax_T$ is Gaussian with zero mean and covariance
    \begin{align*}
        \Sigma^{\err^b}_{\datax_T \datax_T}(\paramii_\infty,\paramii_0,\paramiii_0) &= \ntkk_{\datax_T \datax_T} - \ntkk_{\datax_T \datax} \ntkk_{\datax \datax}^{-1} \ntkk_{\datax \datax_T} \,,
    \end{align*}
    and thus recovers the covariance of the exact Bayesian posterior predictive distribution of an infinitely wide neural network with the corresponding NTK $\ntkk_{xx'}$.
\end{restatable}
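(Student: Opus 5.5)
We follow the route of the proof of Theorem~\ref{thm:rnd_error_dist}, using Eq.~\eqref{eq:bayeserrorprior} to collapse the covariance. In the NTK regime the converged predictor is the linear-dynamics solution with constant kernel $\ntkk$, and the labels are the target values $\tilde{\fiii}(\datax;\paramii_0,\paramiii_0)$. Hence
\begin{align*}
\fii(x;\paramii_\infty) = \fii(x;\paramii_0) + \ntkk_{x\datax}\ntkk_{\datax\datax}^{-1}\bigl(\tilde{\fiii}(\datax;\paramii_0,\paramiii_0) - \fii(\datax;\paramii_0)\bigr).
\end{align*}
Subtracting $\tilde{\fiii}(x;\paramii_0,\paramiii_0)$ gives the error as a fixed, deterministic linear map of the initialization error $\err^b_0(\cdot) = \fii(\cdot;\paramii_0) - \tilde{\fiii}(\cdot;\paramii_0,\paramiii_0)$:
\begin{align*}
\err^b(\datax_T) = \err^b_0(\datax_T) - \ntkk_{\datax_T\datax}\ntkk_{\datax\datax}^{-1}\err^b_0(\datax).
\end{align*}
This step uses only that $\ntkk$ is deterministic and constant in the limit. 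The linear map does not depend on the random initializations.

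The second step, which is the main one, is to show that $\err^b_0$ is a centered GP with kernel $\ntkk$. Proposition~\ref{thm:bayesrndrarget} gives that $\tilde{\fiii}$ is a GP with kernel $\ntkk^{\leq L-1}$, and $\fii(\cdot;\paramii_0)$ is the NNGP with kernel $\nngpk^\fii = \ntkk^L$. Unlike in Theorem~\ref{thm:rnd_error_dist}, the two summands are not built from independent parameters: $\tilde{\fiii}$ reuses $\paramii_0$ through its gradient features. I would handle this by conditioning on $\paramii_0$.
\begin{itemize}
\item Given $\paramii_0$, the function $\tilde{\fiii}$ is linear in $\paramiii_0^{\leq L-1}$, which is independent of $\paramii_0$, and has mean zero.
\item The cross-covariance is therefore $\mathbb{E}[\fii(x;\paramii_0)\tilde{\fiii}(x';\paramii_0,\paramiii_0)] = \mathbb{E}_{\paramii_0}\bigl[\fii(x;\paramii_0)\,\mathbb{E}_{\paramiii_0}[\tilde{\fiii}(x')\mid\paramii_0]\bigr] = 0$.
\item For joint Gaussianity, the conditional law of $\tilde{\fiii}(\datax_T\cup\datax)$ given $\paramii_0$ is $\mathcal{N}(0,\ntkk^{\leq L-1}_0)$. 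Its covariance converges to the deterministic $\ntkk^{\leq L-1}$ by \citet{jacotNeuralTangentKernel2020}.
\item Conditional characteristic functions then factorize in the limit, via bounded convergence as in the proof sketch of Proposition~\ref{thm:bayesrndrarget}. So $(\fii(\cdot;\paramii_0),\tilde{\fiii})$ converges jointly to a pair of independent centered GPs.
\end{itemize}
The conclusion is that $\err^b_0 \sim \mathcal{GP}(0,\ntkk^L + \ntkk^{\leq L-1}) = \mathcal{GP}(0,\ntkk)$, which is exactly Eq.~\eqref{eq:bayeserrorprior}. This asymptotic independence argument is where I expect the real care to be needed. A further subtlety is that the deterministic-kernel convergence must hold jointly with the NNGP convergence of $\fii(\cdot;\paramii_0)$.

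Finally, $\err^b(\datax_T)$ is a linear image of a centered Gaussian vector, so it is Gaussian with zero mean. Its covariance follows from the formula of Theorem~\ref{thm:rnd_error_dist} with $\nngpk^\err$ replaced by $\ntkk$:
\begin{align*}
\ntkk_{\datax_T\datax_T} + \ntkk_{\datax_T\datax}\ntkk_{\datax\datax}^{-1}\ntkk_{\datax\datax}\ntkk_{\datax\datax}^{-1}\ntkk_{\datax\datax_T} - 2\,\ntkk_{\datax_T\datax}\ntkk_{\datax\datax}^{-1}\ntkk_{\datax\datax_T}.
\end{align*}
This simplifies to $\ntkk_{\datax_T\datax_T} - \ntkk_{\datax_T\datax}\ntkk_{\datax\datax}^{-1}\ntkk_{\datax\datax_T}$. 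Comparing with Eq.~\eqref{eq:ntkgppost}, this is the posterior covariance of the NTK-GP, which completes the argument.
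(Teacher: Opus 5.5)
Your proposal is correct and follows the paper's proof. Like the paper, you condition on $\paramii_0$ to show the cross-covariance between $\fii(\cdot;\paramii_0)$ and $\tilde{\fiii}$ vanishes, conclude that the initialization-error kernel is $\ntkk^L + \ntkk^{\leq L-1} = \ntkk$ as in Eq.~\eqref{eq:bayeserrorprior}, and substitute this into the covariance of Theorem~\ref{thm:rnd_error_dist}. Your characteristic-function argument for joint Gaussianity supplies a step the paper leaves implicit, since the paper passes directly from zero covariance to independence; the joint-convergence subtlety you flag is harmless because $\ntkk^{\leq L-1}_0$ converges in probability to a deterministic limit.
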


\textbf{\textit{Proof sketch.}} The result follows by combining Theorem~\ref{thm:rnd_error_dist} and Proposition~\ref{thm:bayesrndrarget}, provided that the GP governing the predictor initialization $\nngpk^\fii_{xx'}$ and the target function $\nngpk^{\tilde{\fiii}}_{xx'}$ are independent. Owing to the fact that the parameters $\paramii_0$ and $\paramiii_0$ are drawn independently, the independence between $\fii(x;\paramii_0)$ and  $\tilde{\fiii}(x;\paramii_0, \paramiii_0)$ is apparent by rewriting the covariance $\mathbb{E}[\fii(x;\paramii_0) \tilde{\fiii}(x;\paramii_0, \paramiii_0)]$ in terms of conditional expectations on $\paramii_0$ by the law of total expectation. Furthermore, since $\ntkk_{xx'} = \ntkk^{L}_{xx'} + \ntkk^{\leq L-1}_{xx'}$ and $\nngpk^{\tilde{\fiii}}_{xx'} = \ntkk^{\leq L-1}_{xx'}$, $\nngpk^\fii_{xx'} = \ntkk^{L}_{xx'}$, we have that $\nngpk^{\err^b}_{xx'}=\ntkk_{xx'}$. In other words, the GP kernel of initial errors aligns with the NTK of the online predictor, such that the distribution of post-convergence errors in Theorem~\ref{thm:rnd_error_dist} simplifies significantly. This same covariance function indeed also defines the posterior predictive distribution of infinitely wide neural networks as described by the GP with prior $\mathcal{GP}(0, \ntkk_{xx'})$ and conditioned on $(\datax, \datay)$.

Theorem~\ref{thm:bayes_rnd_error_dist} shows that with a specifically engineered target function, the RND error signal $\err^b(x;\paramii_\infty, \paramii_0, \paramiii_0) = \fii(x;\paramii_\infty) - \tilde{\fiii}(x;\paramii_0, \paramiii_0)$ is no longer just related to ensemble variance, but rather becomes a direct sample from the centered posterior predictive distribution of a Bayesian model whose prior kernel is the NTK itself. This novel result provides a direct bridge between RND and Bayesian inference in the limit of infinite network width, providing a useful insight: the error signal generated by this modified RND procedure is not merely a heuristic measure of distance, but is itself a random draw from the (centered) Bayesian posterior predictive distribution of an NTK-based GP. This direct distributional equivalence has immediate practical implications, for example prescribing rather straightforwardly how this Bayesian form of RND can be used for exact posterior sampling. By applying Proposition~\ref{thm:nngpind} to the multi-headed Bayesian RND architecture\footnote{In a multi-headed architecture, the Bayesian target function described in Proposition~\ref{thm:bayes_rnd_error_dist} becomes a JVP. Several common machine learning libraries (e.g., JAX \citep{jax2018github} offer dedicated algorithms to compute such JVPs efficiently.}, in contrast to obtaining samples from deep ensembles as done in Theorem~\ref{thm:rndequivalence}, we now obtain several independent samples from the centered posterior predictive distribution through $\err^b_i(x;\paramii_\infty, \paramii_0, \paramiii_0) = \fii_i(x;\paramii_\infty) - \tilde{\fiii}_i(x;\paramii_0, \paramiii_0)$. The below corollary details how this can be leveraged to conduct a posterior sampling procedure, requiring access only to a mean estimate and a single Bayesian RND model.

\begin{crl}[Posterior Sampling via Bayesian RND] \label{crl:bayes_rnd_sampling}
    Let $\mathcal{N}\bigl( \mu^b(x) \,, \,\, \Sigma^b_{xx'} \bigr)$ be the posterior predictive distribution of an infinitely wide neural network conditioned on $x$ with mean $\mu^b(x) = \ntkk_{x \datax} \ntkk^{-1}_{\datax \datax} \datay$ and covariance $\Sigma^b_{xx'} = \ntkk_{xx'} - \ntkk_{x \datax} \ntkk^{-1}_{\datax \datax} \ntkk_{\datax x'}$. Suppose $\tilde{\mu}(x;\param_\infty) \approx \mu^b(x)$ is an estimate of the mean function and let $\{\err^b_i(x;\paramii_\infty, \paramii_0, \paramiii_0)\}_{i=1}^\dimout$ be error functions of a $K$-head Bayesian RND model as defined in Theorem~\ref{thm:bayes_rnd_error_dist}. 
    
    \noindent The following procedure generates (at most $K$) independent samples from the conditional posterior predictive distribution $\mathcal{N}\bigl( \mu^b(x) \,, \,\, \Sigma^b_{xx'} \bigr)$: 
    \begin{enumerate}%[label=(\alph*)]
        \item sample $i \sim \mathcal{U}[1, K]$
        \item compute $\tilde{\mu}_i(x) = \tilde{\mu}(x;\param_\infty) + \err^b_i(x; \paramii_\infty, \paramii_0, \paramiii_0)$ 
        \item $\tilde{\mu}_i(x)$ is an i.i.d. sample from the conditional posterior predictive $\mathcal{N}\bigl( \mu^b(x) \,, \,\, \Sigma^b_{xx'} \bigr)$
    \end{enumerate}
\end{crl}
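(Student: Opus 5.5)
The proof combines three earlier results: Theorem~\ref{thm:bayes_rnd_error_dist}, which identifies the law of a single Bayesian RND error head; Proposition~\ref{thm:nngpind}, which makes the heads independent; and a shift argument that moves the centered distribution to mean $\mu^b$. I will assume the mean estimate is exact, $\tilde{\mu}(x;\param_\infty) = \mu^b(x)$. This estimate is a deterministic function of the training data, for example the NTK kernel regression solution or the mean of the converged network given by Theorem~\ref{thm:converged_f_dist}. It is also independent of $(\paramii_0,\paramiii_0)$. If the estimate is only approximate, the resulting samples are exact up to the deterministic bias $\tilde{\mu}-\mu^b$, and I would state this explicitly.

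\textbf{Step 1 (marginal law of each head).} Fix a head $i$. Theorem~\ref{thm:bayes_rnd_error_dist} was proved for scalar outputs, so the first task is to check that it holds per head. The $i$-th Bayesian target is $\tilde{\fiii}_i(x;\paramii_0,\paramiii_0) = \nabla_{\paramii_0}\fii_i(x;\paramii_0)^\top\paramiii_0^*$. Under the squared loss, $\fii_i$ is trained only on the residual $\fii_i-\tilde{\fiii}_i$ on $\datax$. The dynamics therefore decouple across heads, because the cross-head NTK vanishes in the limit by Proposition~\ref{thm:nngpind}. Consequently Proposition~\ref{thm:bayesrndrarget} and Theorem~\ref{thm:bayes_rnd_error_dist} apply headwise. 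This gives $\err^b_i(\datax_T)\sim\mathcal{N}(0,\Sigma^b_{\datax_T\datax_T})$ jointly over any finite test set, so $\err^b_i$ is a draw from the centered posterior GP.

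\textbf{Step 2 (independence across heads).} I must show that $\err^b_i$ and $\err^b_j$ are independent for $i\neq j$. Since the limit is jointly Gaussian, it suffices to show that their cross-covariance vanishes. After convergence, each error is a linear functional of the initial error process $\err^b_i(\cdot;\paramii_0,\paramii_0,\paramiii_0)=\fii_i(\cdot;\paramii_0)-\tilde{\fiii}_i(\cdot;\paramii_0,\paramiii_0)$, namely $\err^b_i(x)=\err^b_{i,0}(x)-\ntkk_{x\datax}\ntkk_{\datax\datax}^{-1}\err^b_{i,0}(\datax)$. It is therefore enough that the initial error processes of different heads are uncorrelated. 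For the predictor parts, $\mathbb{E}[\fii_i\fii_j]=0$ by Proposition~\ref{thm:nngpind}. For the cross term, I condition on $\paramii_0$: $\tilde{\fiii}$ is linear in the independent zero-mean vector $\paramiii_0$, so the cross term is zero by total expectation, exactly as in the proof sketch of Theorem~\ref{thm:bayes_rnd_error_dist}. The target–target term is the main obstacle. Conditioned on $\paramii_0$, it equals $\nabla_{\paramii^{\leq L-1}}\fii_i(x)^\top\nabla_{\paramii^{\leq L-1}}\fii_j(x')$. This is an off-diagonal entry of the finite-width NTK restricted to the earlier layers, and I need its limit to be $0$. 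I would obtain this from the layer-wise induction behind Proposition~\ref{thm:nngpind}: gradients through distinct last-layer rows $w^L_i$ and $w^L_j$ contract to $\frac{\sigma_w^2}{n}\sum_k w^L_{ik}w^L_{jk}(\cdots)\to0$. Then bounded convergence passes the limit through the expectation over $\paramii_0$, as in Proposition~\ref{thm:bayesrndrarget}.

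\textbf{Step 3 (shift and sampling).} Given Steps 1 and 2, $\tilde{\mu}_i=\mu^b+\err^b_i$ is a deterministic shift of a $\mathcal{N}(0,\Sigma^b)$ process, so it follows the law $\mathcal{N}(\mu^b,\Sigma^b)$. Because the errors are mutually independent across heads, selecting distinct indices produces independent samples, which yields at most $K$ i.i.d.\ posterior draws. Drawing $i$ uniformly does not change the marginal law, since the heads are identically distributed; the ``at most $K$'' qualifier reflects sampling without replacement.
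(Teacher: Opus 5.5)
Your proposal is correct and takes the same route as the paper's one-line sketch: Theorem~\ref{thm:bayes_rnd_error_dist} applied headwise, independence across heads via the decoupling behind Proposition~\ref{thm:nngpind}, and a deterministic shift by the mean. It also makes explicit two points the paper leaves implicit: that $\tilde{\mu}$ must be exact and independent of $(\paramii_0,\paramiii_0)$, and that the shared $\paramiii_0^{\leq L-1}$ turns the target--target cross term into an off-diagonal earlier-layer NTK entry. One refinement: bounded convergence does not apply to the unbounded NTK entry itself. That entry already has exactly zero mean at every width, because $w^L_i$ and $w^L_j$ are independent, zero-mean, and independent of the earlier layers. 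Your limit argument is therefore really needed for independence rather than uncorrelatedness, and it should be run on the conditional characteristic function of the stacked heads (bounded by $1$, with the stacked empirical NTK tending to a block-diagonal deterministic matrix), as in Proposition~\ref{thm:bayesrndrarget}.
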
 

\textbf{\textit{Proof sketch.}} The result follows directly from Theorem~\eqref{thm:bayes_rnd_error_dist} and application of the independence argument of Proposition~\eqref{thm:nngpind} to the multi-headed setting. 

Corollary~\ref{crl:bayes_rnd_sampling} shows that, given an estimator of the posterior predictive mean, a modified Bayesian RND setup can be used to perform direct Bayesian posterior sampling in the NTK limit. By extension, this offers a pathway to performing exact Bayesian inference through the lens of network distillation, provided that the target and predictor networks initializations are handled deliberately.

This completes our theoretical development, first showing an equivalence of RND in the NTK regime to ensemble variance and now, through specific modifications to its target function, to the generation of independent samples from exact Bayesian posterior predictive distributions.

\section{Numerical Analysis}

We proceed with a numerical analysis to validate the thus far presented results. In the following, we study how predictive RND errors relate to predictive variances of deep ensembles in practice, both in the standard and Bayesian settings. To this end, we train two-layer connected neural networks with SiLU activations \citep{elfwing2018sigmoid} on a synthetic dataset with $N=10$ train and $\tilde{N}=5000$ test samples from an isotropic Gaussian $x_i \sim \mathcal{N}(0,I_3)$. Ensemble models are fit to a toy target function, and multiheaded RND models optimized as described above. The variance of the true underlying GP is approximated with Monte-Carlo estimates of 512 independent models and a single Bayesian RND model with 512 heads, such that a small residual amount of discrepancy is to be expected. Fig.~\ref{fig:bayes_rnd} shows a stark decrease in average squared discrepancy between test evaluations of predictive ensemble variances and RND errors as model width increases, a trend in line with our theoretical derivations and present even at practical network widths. Further evaluations and details of this experiment are reported in Appendix~\ref{app:experiments}.

\begin{figure}
    \begin{center}
        \includegraphics[width=\columnwidth]{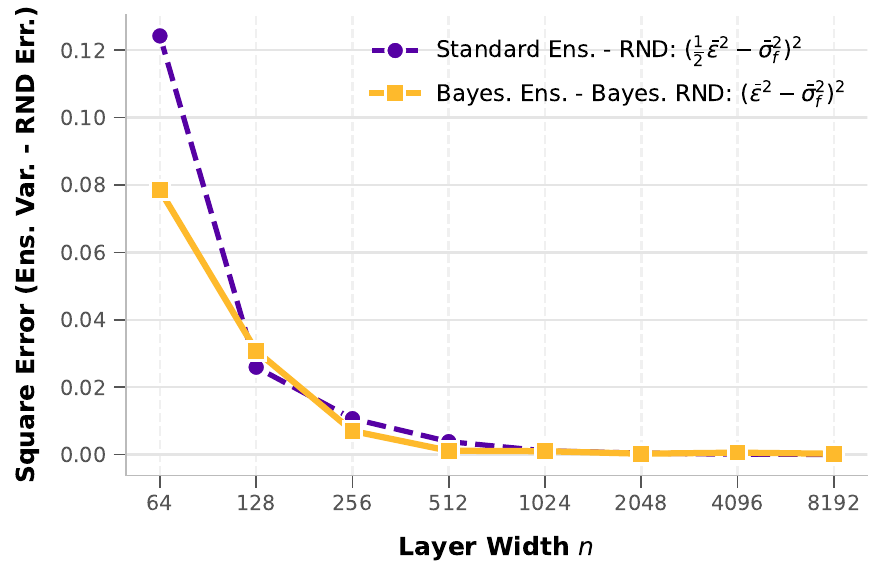}
        \caption{Test-set errors between predictive variances of (Bayesian) ensembles and self-predictive errors of (Bayesian) RND vanish with large layer widths.}
        \label{fig:bayes_rnd}
    \end{center}
\end{figure}

\section{Related Work}

A substantial body of research studies the analytical learning dynamics of deep learning, particularly in the infinite-width limit. Central to our analysis are seminal works characterizing the NNGP \citep{lee2017deep} at initialization, the dynamics-governing NTK \citep{jacotNeuralTangentKernel2020}, and the evolution of wide networks as linear models \citep{leeWideNeuralNetworks2020,  arora2019exact, chizat2018global}. This provides a theoretical framework for analytical descriptions of deep ensembles \citep{lakshminarayananSimpleScalablePredictive2017, dietterich2000ensemble}, with subsequent studies using NTK theory to precisely characterize ensemble variances under various conditions, including observation noise \citep{yang2019scaling, kobayashi2022disentangling, calvo2024epistemic}. A central line of work for our paper is the connection between deep ensembles and Bayesian inference in infinite-width NTK regime. Notably, \citet{heBayesianDeepEnsembles2020a} demonstrate how to construct ``Bayesian ensembles'', an approach we adopt to construct ``Bayesian RND'' algorithms. The broader link between deep ensembles and approximations of Bayesian posteriors has been studied extensively \citep{khan2019approximate, osawa2019practical, dangelo2021repulsive, osbandDeepExplorationRandomized2019, izmailov2021bayesian}. More recently, NTK-based approaches have been used for single-model uncertainty estimation \citep{zanger2026contextual} or ad-hoc uncertainty quantification \citep{wilson2025uncertainty}. Our work provides a theoretical basis for RND \citep{burda2018exploration}, which belongs to a class of computationally cheaper, single-model methods \citep{pathakCuriosityDrivenExplorationSelfSupervised2017, lahlou2021deup, guo2022byol, sensoyEvidentialDeepLearning2018, van2020uncertainty, rudner2022tractable, laurent2022packed, tagasovska2019single}. Moreover, Uncertainty quantification from the lens of learning dynamics is moreover widespread in reinforcement learning (RL)\citep{xiao2021understanding, cai2019neural, wai2020provably, lyle2022learning, yang2020provably}, the original application domain of RND. Notably, \citet{zanger2026universal} derive an RND-like estimator for value function uncertainty using NTK theory. More broadly, deep ensembles and Bayesian methods are widely used in RL, driving exploration \citep{osbandDeepExplorationBootstrapped2016b, chen2017ucb, osbandDeepExplorationRandomized2019, nikolovInformationDirectedExplorationDeep2019, ishfaq2021randomized, zanger2024diverse}.

% While uncertainty quantification has a rich body of literature within reinforcement learning, the application of NTK theory to RL settings is still developing. Several works have leveraged linearized learning dynamics in RL, including in overparameterized settings \citep{xiao2021understanding}, for neural networks with single or multiple layers \citep{cai2019neural, wai2020provably}, to analyze generalization \citep{lyle2022learning}, and to derive provably optimistic value functions \citep{yang2020provably}. Concurrent work to ours studies the infinite-width limit of an RND-like estimator for value function uncertainty \citep{zanger2025universal}. More broadly, deep ensembles and Bayesian methods are widely used in RL, driving exploration \citep{osbandDeepExplorationBootstrapped2016b, chen2017ucb, osbandDeepExplorationRandomized2019, nikolovInformationDirectedExplorationDeep2019, ishfaq2021randomized, zanger2024diverse}, enabling robust offline and off-policy learning \citep{an2021uncertainty, chenRandomizedEnsembledDouble2021, lee2021sunrise}, and ensuring safety \citep{lutjens2019safe, lee2022offline, hoelEnsembleQuantileNetworks2021}. Our work provides a theoretical basis for RND \citep{burda2018exploration}, which belongs to a class of computationally cheaper, single-model methods whose theoretical underpinnings are typically less understood \citep{pathakCuriosityDrivenExplorationSelfSupervised2017, lahlou2021deup, guo2022byol, sensoyEvidentialDeepLearning2018, van2020uncertainty}.

\section{Conclusions}

In this work, we have established a novel theoretical understanding of random network distillation (RND) by connecting it to the principled uncertainty frameworks of deep ensembles and Bayesian inference. By analyzing these techniques within the unifying setting of infinitely wide neural networks, we provide a clear analytical interpretation for the empirically successful RND algorithm. Our analysis yields a twofold equivalence: first, we prove that the squared error of standard RND exactly recovers the predictive variance of deep ensembles in the NTK regime. Second, we demonstrate that the RND framework is more versatile; by deliberately designing the RND target function, the resulting error signal can be made to directly mirror the centered posterior predictive distribution of an NTK-governed GP, that is, the exact posterior predictive distribution of neural networks in the infinite width limit. This ``Bayesian RND'' variant furthermore allows for posterior sampling procedures that produce i.i.d. samples from this posterior. Our work thereby unifies RND, ensembles, and Bayesian inference under the same theoretical lens from an infinite width perspective.

Crucially, our findings hold under the assumptions infinite-width and the NTK regime, a setting where networks effectively linearize and operate as kernel machines with a fixed kernel. This ``lazy'' training regime, while analytically tractable and predictive for very wide networks, does not capture the phenomenon of feature learning. The degree to which our established equivalences translate to practical, finite-width networks that learn features remains a significant open question. Conversely, this also suggest avenues for future research: deviations between RND, ensembles, and Bayesian posteriors in practice must arise from departures from the NTK regime. Characterizing specifically these deviations could lead to novel techniques and a deeper understanding of computationally efficient approaches that approximate Bayesian inference, operating well outside the kernelized infinite-width setting. Another exciting direction is the concept of \emph{target engineering} as cheap way of studying priors for Bayesian deep learning, an actively studied field that garners widespread interested from the uncertainty quantification and Bayesian deep learning community.

\clearpage
% References
\bibliography{uai2026-template}

\newpage

\onecolumn

\title{On the Equivalence of Random Network Distillation, Deep Ensembles, and Bayesian Inference \\(Supplementary Material)}
\maketitle

\appendix

\section{Limitations and Assumptions}

We provide an overview of the primary assumptions underpinning our analysis and discuss their relation to practical settings. The foremost assumption is that our analysis operates within the NTK regime. This framework presupposes the asymptotic limit of infinitely wide neural networks and a so-called NTK-parametrization of forward computations that ensures network dynamics linearize around their initialization, leading to ``lazy'' learning with kernel regression behavior. This idealized setting naturally deviates from practical implementations involving finite-width networks. Nonetheless, a significant body of work has demonstrated that predictions from NTK theory can remain remarkably accurate for sufficiently wide, modern architectures, providing a reasonable approximation of their behavior \citep[e.g.,][]{lee2020finite, seleznova2022analyzing, samarin2020empirical}.

Furthermore, our derivations assume training via full-batch gradient flow, which corresponds to gradient descent with an infinitesimal step size. This abstains from the use of stochastic minibatch optimizers, which are standard in practice. While beyond our current scope, extensions of NTK analysis to incorporate the effects of stochastic gradient noise do exist \citep[e.g.,][]{yang2019scaling, cao2019generalization, nitanda2021optimal}. Finally, our analysis considers a fixed training dataset $\datax$. This contrasts with prominent applications of RND, particularly in online reinforcement learning, where the agent interacts with an environment and learns from an inherently non-stationary data stream. Characterizing how these equivalences with ensembles and Bayesian posteriors evolve under such distribution shifts remains an important open question. 

\section{Proofs}
This section provides extended proofs for our analysis of RND.

%%%%%%%%%%%%%%%%%%%%%%%%%%%%%%%%%%%%%%%%%%%%%%%%%%%%%
% New section
%%%%%%%%%%%%%%%%%%%%%%%%%%%%%%%%%%%%%%%%%%%%%%%%%%%%%
\subsection{Ensemble Equivalence}
Our first result states the equivalence of self-predictive errors of RND and predictive variance of deep ensembles in the infinite-width NTK regime. For completeness, we also include proofs or simplified proof sketches for known results that support our analysis.

\begin{restatable}{thm}{rndwidennconverged}\citep{jacotNeuralTangentKernel2020}(Post-convergence neural network function) \label{thm:converged_f}
    In the limit of infinite layer widths $n \xrightarrow{}\infty$ and infinite time $t \xrightarrow{} \infty$, the output function of a neural network $\f(x;\param_\infty)$ with NTK parametrization according to Eq.~\ref{eq:forwardpass} is given by 
    \begin{align*}
            \f(x;\param_\infty) &= 
            \f(x;\param_0) - \ntkk_{x \datax} \ntkk_{\datax \datax}^{-1} \bigl( \datay - \f(\datax; \param_0) \bigr) \,,
    \end{align*}
    where we used the shorthand $\ntkk_{x x'} \equiv \ntkk(x,x')$.
\end{restatable}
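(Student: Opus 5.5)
Note that the displayed identity carries a sign typo. Solving the gradient-flow ODE gives
\[
\f(x;\param_\infty) = \f(x;\param_0) + \ntkk_{x\datax}\ntkk_{\datax\datax}^{-1}\bigl(\datay - \f(\datax;\param_0)\bigr),
\]
which matches the mean $\ntkk_{x\datax}\ntkk_{\datax\datax}^{-1}\datay$ of Theorem~\ref{thm:converged_f_dist}. I would prove this corrected form.

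The plan is to take the constant-kernel property as given and solve the resulting linear ODE in closed form. Concretely, I rely on the two facts recalled in the preliminaries \citep{jacotNeuralTangentKernel2020, leeWideNeuralNetworks2020}: as $n\to\infty$ the empirical kernel $\ntkk_0$ converges to a deterministic $\ntkk$, and $\ntkk_t=\ntkk_0$ for all $t$ under gradient flow. Substituting into Eq.~\eqref{eq:function_de} turns the function-space dynamics into the linear, time-invariant system
\[
\smalldv{t}\f(x;\param_t) = -\ntkk_{x\datax}\bigl(\f(\datax;\param_t)-\datay\bigr).
\]

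The first step is to restrict this system to the training inputs. The residual $r_t=\f(\datax;\param_t)-\datay$ then satisfies $\smalldv{t} r_t = -\ntkk_{\datax\datax} r_t$, so $r_t = e^{-\ntkk_{\datax\datax}t} r_0$. This requires $\ntkk_{\datax\datax}$ to be positive definite, which holds under the standard assumptions of distinct inputs and a non-polynomial Lipschitz $\nonlinearity$ \citep{jacotNeuralTangentKernel2020}. Under that assumption, $r_t\to 0$ exponentially fast.

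The second step handles an arbitrary test point $x$ by integrating in time:
\[
\f(x;\param_t) = \f(x;\param_0) - \ntkk_{x\datax}\int_0^t e^{-\ntkk_{\datax\datax}s}\,\dd s\; r_0 = \f(x;\param_0) - \ntkk_{x\datax}\ntkk_{\datax\datax}^{-1}\bigl(I - e^{-\ntkk_{\datax\datax}t}\bigr) r_0 .
\]
Letting $t\to\infty$ and substituting $r_0=\f(\datax;\param_0)-\datay$ gives the corrected identity above.

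The main obstacle is justifying that the two limits $n\to\infty$ and $t\to\infty$ can be interchanged. The constancy of the NTK established by Jacot et al.\ holds uniformly only on compact time intervals $[0,T]$. To extend it, I would appeal to the uniform-in-time bound of \citet{leeWideNeuralNetworks2020}, $\sup_t\|\ntkk_t-\ntkk_0\| = O(n^{-1/2})$, which holds with high probability given positive definiteness and a sufficiently small learning rate. Combined with the exponential decay of $r_t$, this shows that the finite-width trajectory stays uniformly close to the linearized one. At the level of a proof sketch, I would cite that result rather than reprove it.
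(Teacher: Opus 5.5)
Your proposal is correct and is essentially the paper's argument (the main-text sketch substitutes the limiting constant kernel into the function-space ODE exactly as you do, and the appendix solves the linearized model $\flin$ with the empirical kernel $\ntkk^0$ before sending $n\to\infty$). Your use of the uniform-in-time bound of Lee et al.\ to swap the $n\to\infty$ and $t\to\infty$ limits is more careful than the paper, though Jacot et al.'s positive-definiteness result is stated for inputs on the unit sphere, so it is cleaner to simply assume $\ntkk_{\datax\datax}\succ 0$ as the paper does. Your sign correction is also right: the paper's own derivation ends at $\f(x;\param_\infty)=\f(x;\param_0)-\ntkk_{x\datax}\ntkk_{\datax\datax}^{-1}\bigl(\f(\datax;\param_0)-\datay\bigr)$, which is your formula, whereas the displayed statement would give $2\f(\datax;\param_0)-\datay$ on the training inputs rather than interpolating $\datay$.
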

\textbf{\textit{Proof sketch.}} By taking the infinite width limit $n \to \infty$, we obtain a linear ODE from Eq.~\eqref{eq:function_de}. Through an exponential ansatz, its explicit solution with initial condition $\f(x; \param_0)$ is given by $\f(x; \param_t) = \f(x; \param_0) + \ntkk_{x \datax}\ntkk_{\datax \datax}^{-1}(I - e^{-t \ntkk_{\datax \datax}})( \datay - \f(\datax; \param_0)).$  Assuming the training Gram matrix $\ntkk_{\datax \datax}$ is positive definite (and thus invertible), the exponential term decays to zero as $t \to \infty$, yielding the kernel regression formula in Proposition~\eqref{thm:converged_f}. See \citet{jacotNeuralTangentKernel2020} and Appendix~\ref{proofconverged_f}.

\subsubsection{Proof of Theorem ~\ref{thm:converged_f}} \label{proofconverged_f}
\begin{proof}
The proof is centered around the learning dynamics of a neural network under gradient descent, whereby we assume the limit of infinitesimal step size for simplicity. This setting is also referred to as ``gradient flow''. The driving force behind the learning dynamics of parameters $\param_t$ is gradient flow optimization on the loss 
\begin{align}
    \mathcal{L}(\param_t) &= \frac{1}{2}\|\, \f(\datax,\param_t) - \datay\,\|^2_2,
\end{align}
with the subsequent evolution of parameters by 
\begin{align}\label{eq:parameter_loss_dynamics}
    \dv{t} \param_t &= -\alpha \nabla_\param \mathcal{L}(\param_t) \,,
\end{align}
where $\alpha$ is a learning rate. From this, we can obtain the parameter space differential equation 
\begin{align} \label{eq:parameter_evolution}
    \dv{t} \param_t &= - \alpha \nabla_\param \f(\datax,\param_t)\bigl( \f(\datax,\param_t) - \datay\ \bigr)\,. 
\end{align}
In order to translate this expression to a function-space view through a first-order Taylor expansion of $\f$ around its initialization parameters $\param_0$:
\begin{align} \label{eq:flin_def}
\flin(x,\param_t) = \f(x,\param_0) + \nabla_\param \f(x,\param_0)^\top (\param_t - \param_0)\,.
\end{align}
The use of a linearized neural network function simplifies the analysis in two aspects: 1.) the linearization offers a simple translation of the parameter space evolution $\dv{t} \param_t$ to a function-space evolution and 2.) the linearized neural network function $\flin(x,\param_t)$ results in linear dynamics, simplifying the earlier derived differential equation to a linear ODE. The evolution of $\flin$ is then obtained by taking the time-derivative of Eq.~\eqref{eq:flin_def} and plugging in the parameter evolution for a linearized function from Eq.~\eqref{eq:parameter_evolution} such that 
\begin{align} \label{eq:linearized_function_de}
    \dv{t} \flin (x,\param_t) &= - \alpha \nabla_\param \f(x,\param_0)^\top \nabla_\param \f(\datax,\param_0)\bigl( \flin(\datax,\param_t) - \datay \bigr)\,.
\end{align}
Let us denote the training error of $\flin$ at time $t$ with $\delta_t = \flin (\datax,\param_t) - \datay$ and accordingly write 
\begin{align} \label{eq:training_error_de}
    \dv{t} \delta_t &= - \alpha \ntkk^0_{\datax \datax} \delta_t\,,
\end{align}
where $\ntkk^0_{\datax \datax}$ denotes the empirical tangent kernel $\ntkk^0_{\datax \datax} = \nabla_\param \f(\datax,\param_0)^\top \nabla_\param \f(\datax,\param_0)$ at initialization. The differential equation~\eqref{eq:training_error_de} is a linear ODE system to which an exponential ansatz provides the explicit solution 
\begin{align}
    \delta_t = e^{-\alpha t \ntkk^{0}_{\datax \datax}} \delta_0 \,,
\end{align} 
where $e^{\ntkk_{\datax \datax}} = \sum_{k=0}^\infty \frac{1}{k!} (\ntkk_{\datax\datax})^\dimout$ is the matrix exponential. We plug this result back in the linearized function space differential equation~\ref{eq:linearized_function_de} to obtain 
\begin{align}
    \dv{t} \flin (x,\param_t) &= - \alpha \ntkk_{x \datax}^0 e^{-\alpha t \ntkk^{0}_{\datax \datax}} \bigl(  \f(\datax,\param_0) - \datay \bigr)\,.
\end{align}
In this form, we can solve for $\flin (x,\param_t)$ directly by integration 
\begin{align} \label{eq:flin_timet}
    \flin(x,\param_t) &= \f(x,\param_0) + \int_0^t \dv{t'} \flin (x,\param_{t'}) \dd t' \\
    &= \f(x,\param_0) + \ntkk^{0}_{x \datax} (\ntkk^{0}_{\datax \datax})^{-1} \Bigl(e^{-\alpha t \ntkk^{0}_{\datax \datax}}-I \Bigr) \bigl( \f(\datax,\param_0) - \datay \bigr)\,.
\end{align}
Remarkably, the linearized and true learning dynamics become increasingly aligned with increasing neural network width. \citet{jacotNeuralTangentKernel2020} and \citet{leeWideNeuralNetworks2020} show that as network width increases, the required individual movement of parameters $\param_t - \param_0$ to effect sufficient movement in the output function $\f(x,\param_t)$ decreases. In the limit of infinite width $n \to \infty$, the linearization of $\f$ then becomes exact $\lim_{n \to \infty} \flin(x,\param_t) = \f(x,\param_t)$. Under the outlined training dynamics, the same limit furthermore causes the NTK to become deterministic (despite random weight initializations) and stationary $\lim_{n \to \infty} \ntkk_{xx'}^0 = \ntkk_{xx'}^t = \ntkk_{xx'}$. Thus, the convergenced function at time $t \to \infty$ is described by 
\begin{align} 
    \f(x,\param_\infty) &= \f(x,\param_0) - \ntkk_{x \datax} \ntkk_{\datax \datax}^{-1} \bigl( \f(\datax,\param_0) - \datay \bigr)\,.
\end{align}
\end{proof}

\subsubsection{Proof of Theorem~\ref{thm:converged_f_dist}} \label{proofconverged_f_dist}
We restate Theorem \ref{thm:converged_f_dist} for convenience. 
\rndwidenndistribution*

\textbf{\textit{Proof sketch.}} We use the fact that $\f(x;\param_\infty)$ can be written as a linear combination of the test initialization $\f(x;\param_0)$ and the training initialization $\f(\datax;\param_0)$. Both these identities are described probabilistically by the NNGP $\f(x;\param_0) \sim \mathcal{GP}(0,\kappa_{xx'})$, and $\f(\datax;\param_0) \sim \mathcal{GP}(0,\kappa_{\datax \datax})$. Applying a linear transformation to a GP yields another GP \citep{rasmussen2006gp}, meaning $\f(x;\param_\infty)$ also follows a GP. Propagating the prior covariance $\nngpk$ through the linear transformation described by Proposition~\ref{thm:converged_f} reveals the expression for the post-convergence covariance function $\Sigma^\f_{\datax_T \datax_T}(\param_\infty)$ given in Theorem~\ref{thm:converged_f_dist}. 

\begin{proof}
The proof builds on the previous result of Proposition~\ref{thm:converged_f} providing a closed-form expression for the post-convergence function as a deterministic function of its initialization, here evaluated for a set of test points $\datax_T$
\begin{align} 
    \f(\datax_T,\param_\infty) &= \f(\datax_T,\param_0) - \ntkk_{\datax_T \datax} \ntkk_{\datax \datax}^{-1} \bigl( \f(\datax,\param_0) - \datay \bigr)\,.
\end{align}
To be precise, the post-convergence predictions $\f(\datax_T,\param_\infty)$ can be written as an affine transformation of the vector $(\f(\datax_T,\param_0), \f(\datax,\param_0)^\top)^\top$. This yields the block matrix equation 
\begin{align} \label{eq:blockeq}
    & \nonumber
    \begin{pmatrix}
        \f(\datax_T,\param_\infty) \\
        \f(\datax,\param_\infty)
    \end{pmatrix} 
     = \\ 
    & \begin{pmatrix}
        I & -\ntkk(\datax_T, \datax)\ntkk(\datax,\datax)^{-1} \\
        0 & 0
    \end{pmatrix} 
    \begin{pmatrix}
        \f(\datax_T,\param_0) \\
        \f(\datax,\param_0)
    \end{pmatrix}
    +
    \begin{pmatrix}
        \ntkk(\datax_T, \datax)\ntkk(\datax, \datax)^{-1}\datay \\
        \datay
    \end{pmatrix} \,.
\end{align}
We recall that, at initialization, neural networks in the infinite width limit distribute to a GP called NNGP \citep{lee2017deep} as 
\begin{align}
    \f(\datax_T,\param_0) \sim \mathcal{GP}(0, \nngpk_{\datax_T \datax_T})\, \quad \text{where} \quad \nngpk_{\datax_T \datax_T} = \mathbb{E}_{\param_0}[\f(\datax_T,\param_0) \f(\datax_T,\param_0)^\top]\,.
\end{align}
The block eq.~\eqref{eq:blockeq} thus describes an affine transformation of a GP itself. We have that affine transformations of multivariate Gaussian random variables $X \sim \mathcal{N}(\mu_X, \Sigma_X)$ with $Y = a + BX$ distribute Gaussian themselves with $Y \sim \mathcal{N}(a+B\mu_X, \,\, B\Sigma_XB^\top)$. Application to Eq.~\ref{eq:blockeq} and rearrangement then yields the post-convergence GP with mean and covariance
\begin{align}
    & \mathbb{E}[\f(\datax_T, \param_\infty)] = \ntkk_{\datax_T\datax}\ntkk_{\datax \datax}^{-1}\datay \,, \\
    & \nonumber \Sigma^\f_{\datax_T \datax_T}(\param_\infty) = \\
    & \quad \nngpk_{\datax_T \datax_T} + \ntkk_{\datax_T \datax} \ntkk_{\datax \datax}^{-1} \nngpk_{\datax \datax} \ntkk_{\datax \datax}^{-1}\ntkk_{\datax \datax_T} - \bigl( \ntkk_{\datax_T \datax}\ntkk_{\datax \datax}^{-1}\nngpk_{\datax \datax_T} + \text{h.c.} \bigr) \,,
\end{align}
where h.c. refers to the Hermitian conjugate of the preceding term. This completes the proof.
\end{proof}

\subsubsection{Proof of Theorem~\ref{thm:rnd_error_dist}} \label{proofrnd_error_dist}
We restate Theorem \ref{thm:rnd_error_dist} for convenience. 
\rnderrordist*
\begin{proof}
This proposition considers the post-convergence distribution of self-predictive errors as produced by RND. The online predictor $\fii(x; \paramii_t)$ undergoes learning dynamics under the same conditions as outlined in the derivation of Proposition~\ref{thm:converged_f}, albeit with the self-predictive loss 
\begin{align}
    \mathcal{L}(\paramii_t) &= \frac{1}{2}\|\, \fii(\datax,\paramii_t) - \fiii(\datax,\paramiii_0)\,\|^2_2 \,.
\end{align}
This, by analogy to Theorem~\ref{thm:converged_f}, implies that the online predictor $\fii(x;\paramii_t)$ converges as $t \to \infty$ to the function 
\begin{align}
    \fii(x,\paramii_\infty) &= \fii(x,\paramii_0) - \ntkk_{x \datax} \ntkk_{\datax \datax}^{-1} \bigl( \fii(\datax,\paramii_0) - \fiii(\datax, \paramiii_0) \bigr)\,.
\end{align}
For a set of test points $\datax_T$, the error $\err(\datax_T; \paramii_\infty, \paramiii_0) = \fii(\datax_T; \paramii_\infty) - \fiii(\datax_T; \paramiii_0)$ at convergence can thus be written as the affine transformation
\begin{align}
    \err(\datax_T; \paramii_\infty, \paramiii_0) &= \err(\datax_T; \paramii_0, \paramiii_0)  - \ntkk_{\datax_T \datax} \ntkk_{\datax \datax}^{-1} \err(\datax; \paramii_0, \paramiii_0)\,.
\end{align}
and the corresponding block matrix equation
\begin{align} \label{eq:blockeq_rnd}
    \begin{pmatrix}
        \err(\datax_T; \paramii_\infty, \paramiii_0) \\
        \err(\datax; \paramii_\infty, \paramiii_0)
    \end{pmatrix} 
    &=
    \begin{pmatrix}
        I & -\ntkk_{\datax_T \datax} \ntkk_{\datax \datax}^{-1} \\
        0 & 0
    \end{pmatrix} 
    \begin{pmatrix}
        \err(\datax_T; \paramii_0, \paramiii_0) \\
        \err(\datax; \paramii_0, \paramiii_0)
    \end{pmatrix} \,.
\end{align}
The errors accordingly are themselves Gaussian with $\err(\datax_T; \paramii_\infty, \paramiii_0) \sim \mathcal{GP}(0, \nngpk^\err_{\datax_T \datax_T})$ where $\nngpk^\err_{\datax_T \datax_T} = \mathbb{E}_{\paramii_0, \paramiii_0}[\err(\datax_T;\paramii_0, \paramiii_0) \err(\datax_T;\paramii_0, \paramiii_0)^\top]$. The latter term describes the distribution of self-predictive errors at initialization, which is a simple sum of two independent NNGP $\err(\datax_T;\paramii_0, \paramiii_0) = \fii(\datax_T; \paramii_0) - \fiii(\datax_T; \paramiii_0)$ such that $\nngpk^\err_{\datax_T \datax_T} = \nngpk^\fii_{\datax_T \datax_T} + \nngpk^\fiii_{\datax_T \datax_T}$, completing the proof.\
\end{proof}

\subsubsection{Proof of Proposition~\ref{thm:nngpind}} \label{proofnngpind}
Before treating Proposition~\ref{thm:nngpind} we first derive two known results concerning the independence and recursive character of the NNGP kernel and the NTK. We assume forward computations of $\f(x;\param_t)$ are defined according to Eq.~\ref{eq:forwardpass}. To avoid confusion with indices $i,j$ we will in this section use the notation $\nngpk(x,x')$ rather than $\nngpk_{xx'}$ to denote the function inputs $x,x'$ (and similarly for $\ntkk(x,x')$). 

\begin{restatable}{prp}{rndindependencenngp}\citep{lee2017deep} \label{thm:nngpindproof}(Recursive NNGP formulation) 
At initialization $t=0$ and in the limit $n \to \infty$, the $i$-th output at layer $l$, $z_i^l(x; \param_0^{\leq l})$, converges to a GP with zero mean and covariance function $\nngpk_{ii}^l(x,x')$ given by
\begin{align}
    \nngpk_{ii}^1(x,x') &= \frac{\sigma_w^2}{n_0}x^\top x' + \sigma_b^2,\quad \text{and}\quad k_{ij}^1(x,x')=0,\quad \text{if } i\neq j \,, \\
    \nngpk_{ii}^{l}(x,x') &= \sigma_b^2 + \sigma_w^2 \mathbb{E}_{z_i^{l-1}\sim \mathcal{GP}(0,\nngpk_{ii}^{l-1})}[\nonlinearity(z_i^{l-1}(x; \param_0^{\leq l-1}))\nonlinearity(z_i^{l-1}(x'; \param_0^{\leq l-1}))] \,, \\
    & \qquad \qquad \qquad \qquad \,\,\, \text{and} \quad \nngpk_{ij}^{l}(x,x') = 0, \quad \text{if } i\neq j\,,
\end{align}
and we have $\nngpk_{ii}^l(x,x')=\nngpk^l(x,x')\, ,\quad \forall i$.
\end{restatable}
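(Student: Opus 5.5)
We proceed by induction on the layer index $l$, following the sequential-limit argument of \citet{lee2017deep}: the widths $n_1, n_2, \dots$ are sent to infinity one after another. In each step we condition on all parameters below layer $l$. Conditionally on $\param_0^{\leq l-1}$, each $z_i^l(x)$ is an affine function of the Gaussian vector $(w_{i\cdot}^l, b_i^l)$. So the family $\{z_i^l(x)\}_{i,x}$ is conditionally Gaussian with zero mean. Because the rows $(w^l_{i\cdot}, b^l_i)$ are independent for distinct $i$, its conditional covariance is
\begin{align*}
\mathbb{E}\bigl[z_i^l(x) z_j^l(x') \,\big|\, \param_0^{\leq l-1}\bigr] = \delta_{ij}\Bigl(\sigma_b^2 + \frac{\sigma_w^2}{n_{l-1}}\smallsum{k=1}{n_{l-1}} x_k^l(x)\, x_k^l(x')\Bigr).
\end{align*}
This is the key formula. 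Different output indices are conditionally uncorrelated, hence conditionally independent as jointly Gaussian variables. The diagonal entries do not depend on $i$.

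\textbf{Base case $l=1$.} Here $x^1(x) = x$ is deterministic, so no conditioning is needed. The display gives exactly $\nngpk^1_{ii}(x,x') = \frac{\sigma_w^2}{n_0} x^\top x' + \sigma_b^2$ and $\nngpk^1_{ij}=0$ for $i\neq j$. The outputs are exactly Gaussian at any width.

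\textbf{Induction step.} Assume that, as $n_1,\dots,n_{l-2}\to\infty$, the processes $z_k^{l-1}(\cdot)$ converge to i.i.d. copies of $\mathcal{GP}(0,\nngpk^{l-1})$, one for each $k$. Then the summands $x_k^l(x)x_k^l(x') = \nonlinearity(z_k^{l-1}(x))\nonlinearity(z_k^{l-1}(x'))$ are i.i.d. across $k$. They are also integrable: $\nonlinearity$ is Lipschitz, so it grows at most linearly, and Gaussians have finite second moments. By the strong law of large numbers, the empirical average converges almost surely as $n_{l-1}\to\infty$ to
\begin{align*}
\mathbb{E}_{z\sim\mathcal{GP}(0,\nngpk^{l-1})}\bigl[\nonlinearity(z(x))\nonlinearity(z(x'))\bigr].
\end{align*}
The conditional covariance therefore converges to the deterministic kernel $\nngpk^l$ of the statement, with zero off-diagonal entries.

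It remains to lift this to the unconditional law. For any finite set of inputs and any finite set of indices, the conditional characteristic function is $\exp(-\tfrac12 s^\top C_n s)$, where $C_n$ is the random conditional covariance. This quantity is bounded by $1$, and $C_n \to C$ almost surely. Dominated convergence therefore gives convergence of the unconditional characteristic function to that of $\mathcal{N}(0,C)$, with $C$ block-diagonal in the output index $i$. Joint Gaussianity together with zero cross-covariance yields independence across $i$. The identical diagonal blocks give $\nngpk^l_{ii}=\nngpk^l$ for all $i$.

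\textbf{Main obstacle.} The delicate step is the i.i.d. claim for the summands at layer $l-1$ in the limit. It requires the induction hypothesis to carry independence across all previous-layer units jointly over inputs, not merely marginal Gaussianity of each unit. It also requires interchanging the sequential width limits with the law of large numbers. We handle this by keeping the hypothesis in its strong form: the finite-dimensional distributions of $(z_k^{l-1})_{k\le m}$ converge to independent GPs for every fixed $m$. Combined with the conditioning argument above, this suffices for the sequential limit. A simultaneous limit in all widths would require a more careful exchangeability argument, which we do not attempt here.
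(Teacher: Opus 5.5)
Your proof is correct, but it takes a different route from the paper's. The paper works with the unconditioned sum $z_i^l(x)=\sigma_b b_i^l+\frac{\sigma_w}{\sqrt{n_{l-1}}}\sum_j w_{ij}^l x_j^l(x)$. It notes that by the induction hypothesis the summands are independent across $j$, and applies the CLT to the pair $(z_i^l(x),z_i^l(x'))$ for a single $i$. It then reads off $\nngpk^l_{ij}=0$ from the independence of the rows of $(w^l,b^l)$.

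You instead condition on $\param_0^{\leq l-1}$, which makes layer $l$ exactly Gaussian at every width with a random block-diagonal covariance. The limit then reduces to a law of large numbers for the empirical kernel, which needs only the first-moment bound you derive from the Lipschitz property of $\nonlinearity$. You finish with bounded convergence of characteristic functions and L\'evy's theorem. This is the same device the paper itself uses later for the Bayesian RND target, and the style of argument in \citet{jacotNeuralTangentKernel2020}.

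Your route gives a clean justification of independence across output indices. At finite width $z_i^l$ and $z_j^l$ share the same post-activations $x^l(\cdot)$, so they are only uncorrelated. Upgrading $\nngpk^l_{ij}=0$ to independence of the limiting processes therefore requires joint Gaussianity over all pairs $(i,x)$. Your conditional characteristic function over finitely many indices and inputs gives this directly. The paper's single-$i$ CLT would instead have to be replaced by a multivariate CLT over $(i,x)$ jointly.

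The paper's route is shorter and is the classical Neal/Lee argument. Both are sequential-limit proofs. You are explicit about this, and about the strong induction hypothesis (joint independence of all layer-$(l-1)$ units) that the paper uses tacitly.
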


\begin{proof}
We prove the proposition by induction. The induction assumption is that if outputs at layer $l-1$ satisfy a GP structure
\begin{align}
    z_i^{l-1} \sim \mathcal{GP}(0,\nngpk^{l-1}),
\end{align}
with the covariance function defined as
\begin{align}
    \nngpk_{ij}^{l-1}(x,x') &= \mathbb{E}[z_i^{l-1}(x; \param_0^{\leq l-1}) z_j^{l-1}(x'; \param_0^{\leq l-1})] = \begin{cases}
        k^{l-1}(x,x') &\quad \text{if } i=j\,, \\
        0 & \quad \text{if } i \neq j\,, \\
    \end{cases}
\end{align}
then, outputs at layer $l$ follow
\begin{align}
    z_i^{l}(x)\sim \mathcal{GP}(0,\nngpk^{l}),
\end{align}
where the NNGP kernel at layer $l$ is given by:
\begin{align}
    \nngpk_{ii}^{l}(x,x') &= \mathbb{E}[z_i^{l}(x; \param_0^{\leq l}) z_i^{l}(x'; \param_0^{\leq l})] = \nngpk^{l}(x,x'), \quad \forall i, \\
    \nngpk_{ij}^{l}(x,x') &= \mathbb{E}[z_i^{l}(x; \param_0^{\leq l}) z_j^{l}(x'; \param_0^{\leq l})] = 0,\quad \text{if } i\neq j.
\end{align}
with the recursive definition 
\begin{align}
   \nngpk^{l}(x,x') = \sigma_b^2 + \sigma_w^2 \mathbb{E}_{z_i^{l-1}\sim \mathcal{GP}(0,k^{l-1})}[\nonlinearity(z_i^{l-1}(x; \param_0^{\leq l-1}))\nonlinearity(z_i^{l-1}(x'; \param_0^{\leq l-1}))].
\end{align}
\emph{Base case $(l=1)$}. At layer $l=1$ we have:
\begin{align}
    z_i^1(x; \param_0^{\leq 1})=\frac{\sigma_w}{\sqrt{n_0}}\sum_{j=1}^{n_0} w_{ij}^1 x_j + \sigma_b b_i^1 \,.
\end{align}
This is an affine transform of Gaussian random variables; thus, $z_i^1(x; \param_0^{\leq 1})$ distributes Gaussian with
\begin{align}
    z_i^1(x)\sim \mathcal{GP}(0,\nngpk^1),    
\end{align}
with kernel
\begin{align}
    \nngpk^1(x,x')=\frac{\sigma_w^2}{n_0}x^\top x' + \sigma_b^2 = \nngpk^1_{ii}(x,x')\,, \quad\text{and}\quad \nngpk_{ij}^1=0,\quad \text{if }\, i\neq j\,,
\end{align}
where the independence follows from the fact that $z_i^1(x; \param_0^{\leq 1})$ is computed from separate, independent rows of weights and biases. 

\emph{Induction step $l>1$.} For layers $l>1$ we have
\begin{align}
z_i^{l}(x; \param_0^{\leq l})=\sigma_b b_i^{l}+\frac{\sigma_w}{\sqrt{n_{l-1}}}\sum_{j=1}^{n_{l-1}}w_{ij}^{l} x_j^{l}(x),
\quad x_j^{l}(x)=\phi(z_j^{l-1}(x; \param_0^{\leq l-1})) \,.
\end{align}
By the induction assumption, $z_j^{l-1}(x; \param_0^{\leq l-1})$ are generated by independent GP. Hence, $x_i^{l}(x)$ and $x_{j}^{l}(x)$ are independent for $i\neq j$. Consequently, $z_i^{l}(x; \param_0^{\leq l})$ is a sum of independent random variables. By the CLT (as $n_1, \dots, n_{L} \rightarrow\infty$) the tuple $\{z_i^{l}(x; \param_0^{\leq l}),z_{i}^{l}(x'; \param_0^{\leq l})\}$ tends to be jointly Gaussian, with covariance given by:
\begin{align}
    & \nonumber \mathbb{E}[z_i^{l}(x; \param_0^{\leq l}) z_i^{l}(x'; \param_0^{\leq l})] 
    = \\
    & \quad \sigma_b^2 + \sigma_w^2\mathbb{E}_{z_i^{l-1}\sim \mathcal{GP}(0,\nngpk^{l-1})}[\nonlinearity(z_i^{l-1}(x; \param_0^{\leq l-1}))\nonlinearity(z_i^{l-1}(x'; \param_0^{\leq l-1}))] \,.
\end{align}
Moreover, as $z_i^{l}$ and $z_j^{l}$ for $i\neq j$ are defined through independent rows of the parameters $w^{l}, b^{l}$ and independent pre-activations $x^{l}(x)$, we have
\begin{align}
    \nngpk_{ij}^{l} = \mathbb{E}[z_i^{l}(x) z_j^{l}(x')]=0,\quad \text{if }\, i\neq j,  
\end{align}
and thus completing the proof.
\end{proof}

\begin{restatable}{prp}{rndindependencentk}\citep{jacotNeuralTangentKernel2020} \label{thm:ntkind}(Recursive NTK formulation)
In the limit $n \to \infty$, the neural tangent kernel $\ntkk^l_{ii}(x,x')$ of the $i$-th output $z_i^l(x; \param_0^{\leq l})$ at layer $l$, defined as the gradient inner product
\begin{align}
    \ntkk^l_{ii}(x,x') = \nabla_{\param^l} z_i^l(x; \param_0^{\leq l})^\top \nabla_{\param^l} z_i^l(x'; \param_0^{\leq l}) \,,
\end{align}
is given recursively by 
\begin{align}
    \ntkk_{ii}^1(x,x') &= \nngpk_{ii}^1 (x,x') = \frac{\sigma_w^2}{n_0}x^\top x' + \sigma_b^2,\quad\text{and}\quad \ntkk_{ij}^1(x,x')=0,\quad \text{if }\, i\neq j\,, \\
    \ntkk_{ii}^{l}(x,x') &= \ntkk_{ii}^{l-1} (x,x') \dot{\nngpk}_{ii}^{l-1} (x,x') + \nngpk_{ii}^{l}(x,x'), \\
\end{align}
where
\begin{align}
    \dot{\nngpk}_{ii}^{l} (x,x') &= \sigma_w^2 \mathbb{E}_{z_i^{l-1}\sim \mathcal{GP}(0,\nngpk_{ii}^{l-1})}[\dot{\nonlinearity}(z_i^{l-1}(x; \param_0^{\leq l-1}))\dot{\nonlinearity}(z_i^{l-1}(x'; \param_0^{\leq l-1}))]\,,
\end{align}
and
\begin{align}
    \ntkk_{ij}^{l}(x,x') &= \nabla_{\param^l} z_i^{l}(x; \param_0^{\leq l})^\top \nabla_{\param^l} z_j^l(x'; \param_0^{\leq l}) = 0 \quad \text{if }\, i\neq j.
\end{align}
\end{restatable}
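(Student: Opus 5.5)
The plan is to argue by induction on the layer index $l$, mirroring the proof of Proposition~\ref{thm:nngpindproof}. I read the gradient $\nabla_{\param^l}$ in the statement as the gradient with respect to all parameters $\param^{\leq l}$ up to layer $l$, since this is what makes the recursion meaningful. For the \emph{base case} $l=1$, $z_i^1(x)=\frac{\sigma_w}{\sqrt{n_0}}\sum_j w_{ij}^1 x_j+\sigma_b b_i^1$ is linear in $(w^1,b^1)$. Its gradient with respect to row $i$ is $(\frac{\sigma_w}{\sqrt{n_0}}x,\sigma_b)$, and it is zero for every other row. The inner product therefore gives $\frac{\sigma_w^2}{n_0}x^\top x'+\sigma_b^2=\nngpk^1_{ii}(x,x')$ exactly, at any width. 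For $i\neq j$ the two gradients have disjoint supports, so $\ntkk^1_{ij}=0$.

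For the \emph{induction step}, I split the parameters of $z_i^l$ into the last-layer block $(w^l,b^l)$ and the earlier block $\param^{\leq l-1}$. The last-layer block contributes
\begin{align*}
\sigma_b^2+\frac{\sigma_w^2}{n_{l-1}}\sum_{k}\nonlinearity(z_k^{l-1}(x))\nonlinearity(z_k^{l-1}(x')) \quad \text{if } i=j,
\end{align*}
and $0$ if $i\neq j$, again because of disjoint row supports. As $n_{l-1}\to\infty$, the induction hypothesis of Proposition~\ref{thm:nngpindproof} says the $z_k^{l-1}$ are i.i.d.\ GPs with kernel $\nngpk^{l-1}$. The law of large numbers then gives convergence to $\nngpk^l_{ii}(x,x')$. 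For the earlier block, the chain rule gives
\begin{align*}
\nabla_{\param^{\leq l-1}} z_i^l(x)=\frac{\sigma_w}{\sqrt{n_{l-1}}}\sum_k w^l_{ik}\,\dot\nonlinearity(z_k^{l-1}(x))\,\nabla z_k^{l-1}(x).
\end{align*}
The inner product of two such terms is
\begin{align*}
\frac{\sigma_w^2}{n_{l-1}}\sum_{k,k'} w^l_{ik}w^l_{jk'}\,\dot\nonlinearity(z_k^{l-1}(x))\dot\nonlinearity(z_{k'}^{l-1}(x'))\,\ntkk^{l-1}_{kk'}(x,x').
\end{align*}
Applying the induction hypothesis in the limit $n_1,\dots,n_{l-2}\to\infty$ first, $\ntkk^{l-1}_{kk'}$ becomes $\delta_{kk'}\ntkk^{l-1}(x,x')$, which leaves
\begin{align*}
\ntkk^{l-1}(x,x')\,\frac{\sigma_w^2}{n_{l-1}}\sum_k w^l_{ik}w^l_{jk}\,\dot\nonlinearity(z_k^{l-1}(x))\dot\nonlinearity(z_k^{l-1}(x')).
\end{align*}
The weights $w^l$ are independent of the earlier layers. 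When $i=j$ the summands are i.i.d.\ with mean $\mathbb{E}[(w^l_{ik})^2]\cdot\mathbb{E}[\dot\nonlinearity\dot\nonlinearity]$, so the LLN yields $\dot\nngpk^{l-1}$ times $\ntkk^{l-1}$. When $i\neq j$ the summands have mean $\mathbb{E}[w^l_{ik}]\mathbb{E}[w^l_{jk}]=0$ and bounded variance, so the average vanishes at rate $O(n_{l-1}^{-1/2})$. Adding the two blocks gives the stated recursion and the off-diagonal zero.

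I expect the main obstacle to be the limit interchange. The step above takes the earlier widths to infinity before $n_{l-1}$, while the summands at layer $l$ still depend on finite-width quantities. I would first handle this with the sequential-limit convention of \citet{jacotNeuralTangentKernel2020}, taking $n_1\to\infty$, then $n_2\to\infty$, and so on. Under that convention the induction hypothesis holds exactly before the LLN is applied at layer $l-1$. The moment conditions needed for the LLN follow from the Lipschitz continuity of $\nonlinearity$, which makes $\dot\nonlinearity$ bounded, together with the Gaussian tails of $w^l$ and of the NNGP. A simultaneous-width version would require the more delicate tensor-program arguments, and I would cite those rather than reprove them.
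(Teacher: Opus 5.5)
Your proposal is correct and follows essentially the same route as the paper: induction on $l$, with the layer-$l$ kernel split into the $\{w^l,b^l\}$ block (which yields $\nngpk^l_{ii}$ by the NNGP/LLN argument and vanishes off-diagonal) and the $\param^{\leq l-1}$ block (handled via the chain rule, the induction hypothesis, and independence of the rows of $w^l$). You are in fact more careful than the paper, which silently drops the $k\neq k'$ cross terms and leaves the order of limits implicit; one bookkeeping nit is that the limit you obtain, $\sigma_w^2\mathbb{E}_{z^{l-1}}[\dot{\nonlinearity}(z^{l-1}(x))\dot{\nonlinearity}(z^{l-1}(x'))]$, is $\dot{\nngpk}^{l}_{ii}$ under the statement's own definition (as the paper's proof writes it), so the $\dot{\nngpk}^{l-1}_{ii}$ in the displayed recursion is an index slip inherited from the statement rather than what your argument actually establishes.
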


\begin{proof}
The proof is by induction. The induction assumption is that if gradients satisfy at layer $l-1$
\begin{align}
& \nonumber \ntkk_{ij}^{l-1}(x,x') = \\ 
& \quad \nabla_{\param^{l-1}} z_i^{l-1}(x; \param_0^{\leq l-1})^\top \nabla_{\param^{l-1}} z_j^{l-1}(x'; \param_0^{\leq l-1}) = 
    \begin{cases}
        \ntkk^{l-1}(x,x') & \quad \text{if } i=j, \\
    0 &\quad \text{if } i\neq j,
    \end{cases}
\end{align}
then at layer $l$ we have
\begin{align}
    \ntkk_{ij}^{l}(x,x')&= \begin{cases}
        \ntkk_{ii}^{l-1}(x,x')\dot{\nngpk}_{ii}^{l}(x,x')+\nngpk_{ii}^{l}(x,x') & \quad \text{if } i=j \,, \\ 
        0 & \quad \text{if } i \neq j \,. \\
    \end{cases}
\end{align}

\emph{Base case ($l=1$).} At layer $l=1$, we have
\begin{align}
z_i^1(x; \param_0^{\leq 1})&=\sigma_b b_i^1+\frac{\sigma_w}{\sqrt{n_0}}\sum_j^{n_0} w_{ij}^1 x_j,
\end{align}
and the gradient inner product is given by:
\begin{align}
\nabla_{\param^1}z_i^1(x; \param_0^{\leq 1})^\top\nabla_{\param^1}z_i^1(x'; \param_0^{\leq 1})=\frac{\sigma_w^2}{n_0}x^\top x'+\sigma_b^2=\nngpk_{ii}^1(x,x').
\end{align}
\emph{Inductive step ($l>1$).} For layers $l>1$, we split parameters $\param^{l}=\param^{l-1}\cup\{w^{l},b^{l}\}$ and split the inner product by
\begin{align}
\ntkk_{ii}^{l}(x,x')&= \underbrace{ \nabla_{\param^{l-1}}z_i^{l}(x; \param_0^{\leq l})^\top\nabla_{\param^{l-1}}z_i^{l}(x'; \param_0^{\leq l}) }_{l.h.s}
+
\underbrace{\nabla_{ \{ w^{l},b^{l} \} }z_i^{l}(x; \param_0^{\leq l})^\top\nabla_{ \{ w^{l},b^{l} \} } z_i^{l}(x; \param_0^{\leq l})}_{r.h.s}.
\end{align}
Note that the above $r.h.s$ involves gradients w.r.t. last-layer parameters, i.e. the post-activation outputs of the previous layer, and by the same arguments as in the NNGP derivation of Proposition~\ref{thm:nngpindproof}, this is a sum of independent post activations s.t. in the limit $n_{l-1} \xrightarrow{} \infty$
\begin{align}
    \nabla_{\{ w^{l},b^l \}} z_i^{l}(x; \param_0^{\leq l})^\top\nabla_{\{ w^{l}, b^l \}} z_j^{l} (x'; \param_0^{\leq l})&= \begin{cases}
        k_{ii}^{l}(x,x'), & \quad i=j, \\
        0, & \quad i\neq j.
    \end{cases}
\end{align}
For the $l.h.s.$, we first apply chain rule to obtain
\begin{align}
\nabla_{\param^{l-1}}z_i^{l}(x; \param_0^{\leq l})=\frac{\sigma_w}{\sqrt{n_{l-1}}}\sum_j^{n_{l-1}} w_{ij}^{l}\dot{\nonlinearity}(z_j^{l-1}(x; \param_0^{\leq l-1}))\nabla_{\param^{l-1}}z_j^{l-1}(x; \param_0^{\leq l-1}) \,.
\end{align}
The gradient inner product of outputs $i$ and $j$ thus reduces to 
\begin{align} \nonumber
& \nabla_{\param^{l-1}}z_i^{l}(x; \param_0^{\leq l})^\top \nabla_{\param^{l-1}}z_j^{l}(x'; \param_0^{\leq l}) = \\ 
& \qquad \frac{\sigma_w^2}{n_{l-1}}\sum_k^{n_{l-1}} w_{ik}^{l} w_{jk}^{l}\dot{\nonlinearity} (z_k^{l-1} (x; \param_0^{\leq l-1}) ) \dot{\nonlinearity}(z_k^{l-1} (x'; \param_0^{\leq l-1}) ) \ntkk_{kk}^{l-1}(x,x')\,.
\end{align} 
By the induction assumption $\ntkk_{kk}^{l-1}(x,x')=\ntkk^{l-1}(x,x')$ and again by the independence of the rows $w^l_i$ and $w_j^l$ for $i\neq j$, the above expression converges in the limit $n_{l-1} \xrightarrow{} \infty$ to an expectation with 
\begin{align}
    \ntkk_{ij}^{l}(x,x') = \begin{cases}
        \ntkk^{l-1}(x,x') \dot{\nngpk}_{ii}^{l}(x,x') + \nngpk_{ii}^{l}(x,x') & \quad i=j, \\
        0 &\quad i\neq j \,,
    \end{cases}
\end{align}
thereby completing the proof.
\end{proof}

We now restate Proposition~\ref{thm:nngpind} for convenience. 
\rndindependenceprp*
\begin{proof}
    We begin by deriving the training dynamics for the output $\f_i(x;\param_t)$ analogously to the proof of Proposition~\ref{thm:converged_f}. We denote by $\datay_i$ the labels used to train the function $\f_i(x;\param_t)$. By Proposition~\ref{thm:ntkind}, the training dynamics of $\f_i(x;\param_t)$ and $\f_j(x;\param_t)$ are decoupled for $i\neq j$ and we can thus derive Eq.~\ref{eq:flin_timet} analogously for individual output heads $i$. Taking the infinite width limit, we obtain at time $t$
    \begin{align}
        \f_i(x;\param_t) = \f_i(x;\param_0) + \ntkk_{ii}(x, \datax) \ntkk_{ii}(\datax, \datax)^{-1} \Bigl( e^{-\alpha t \ntkk_{ii}(\datax, \datax)} - I\Bigr) (\f_i(\datax; \param_0)-\datay_i) \,.
    \end{align}
    Thus, the output head $\f_i(x;\param_t)$ at time $t$ is a deterministic function of its own initialization only, which itself is characterized by a GP $\f_i(x;\param_0) \sim \mathcal{GP}(0, \nngpk_{ii}(x,x'))$ that is independent of output heads $j \neq i$ by Proposition~\ref{thm:nngpindproof}. And thus, since $\f_i(x;\param_t)$ is an affine transform of its own independent initialization terms $\f_i(x;\param_0)$ and $\f_i(\datax;\param_0)$, it too must follow an independent GP with $\mathbb{E}_{\param_0}[\f_i(x;\param_t)\f_i(x';\param_t)] = \Sigma(x,x';\param_t)$ and in particular $\mathbb{E}_{\param_0}[\f_i(x;\param_t)\f_j(x';\param_t)] = 0$ if $i\neq j$.
\end{proof}

\subsubsection{Proof of Theorem~\ref{thm:rndequivalence}} \label{proofrndequivalence}
We restate Theorem~\ref{thm:rndequivalence} for convenience. 
\rndequivalence*

\begin{proof}
The proof follows by combining the results of Propositions~\eqref{thm:rnd_error_dist} and \eqref{thm:nngpind}. We define a multiheaded RND predictor with $K$ output heads $\{ \fii_i(x, \paramii_t) \}_{i=1}^\dimout$ and a fixed multiheaded target network $\{ \fiii_i(x_t;\paramiii_0) \}_{i=1}^\dimout$ of equivalent architecture as $\fii_i$ (i.e., both corresponding to the same NTK $\ntkk$) with the corresponding prediction errors $\{ \err_i(x;\paramii_t, \paramiii_0) \}_{i=1}^\dimout$ accordingly. Let $\fii_i(x, \paramii_t)$ be trained such that each head $i$ is trained to match the $i$-th target output $\fiii_i(x;\paramiii_0)$. 
 
By Proposition~\ref{thm:nngpind}, the predictions of online predictor heads $\{ \fii_i(x, \paramii_t) \}_{i=1}^\dimout$ at time $t$ and fixed target networks $\{ \fiii_i(x_t;\paramiii_0) \}_{i=1}^\dimout$ are each mutually independent with 
\begin{align}
    \mathbb{E}_{\paramii_0}[\fii_i(x;\paramii_t)\fii_j(x;\paramii_t)] &= 0\,, \quad \text{if }\, i \neq j\,, \\
    & \nonumber \text{and} \\
    \mathbb{E}_{\paramiii_0}[\fiii_i(x;\paramiii_0)\fiii_j(x;\paramiii_0)] &= 0\,, \quad \text{if }\, i \neq j\,.
\end{align}
As a consequence, we also have that 
\begin{align}
    \mathbb{E}_{\paramii_0, \paramiii_0} [\err_i(x;\paramii_t, \paramiii_0) \err_j(x;\paramii_t, \paramiii_0)] = 0\,, \quad \text{if }\, i \neq j\,.
\end{align}
As previously established in the proof of Proposition~\ref{thm:nngpind}, the multi-headed functions $\{ \err_i(x;\paramii_t, \paramiii_0) \}_{i=1}^\dimout$ follow equivalent learning dynamics as their scalar-output counterparts. The post-convergence distribution of individual heads $\err_i(x;\paramii_\infty, \paramiii_0)$ must therefore equal the scalar-output post-convergence distribution established in Theorem~\ref{thm:rnd_error_dist}. Consequently, the errors $\{ \err_i(x;\paramii_t, \paramiii_0) \}_{i=1}^\dimout$ are independent and identically distributed draws from a Gaussian with mean and covariance 
\begin{align*}
    \mathbb{E}[\err(x, \paramii_\infty,\paramiii_0)] &= 0 \,, \\
    \Sigma^\err_{x x'}(\paramii_\infty,\paramiii_0) &= \nngpk^{\err}_{x x'} + \ntkk_{x \datax} \ntkk_{\datax \datax}^{-1} \nngpk^\err_{\datax \datax} \ntkk_{\datax \datax}^{-1}\ntkk_{\datax x'} - \bigl( \ntkk_{x \datax}\ntkk_{\datax \datax}^{-1}\nngpk^\err_{\datax x'} + \text{h.c.} \bigr) \,,
\end{align*} 
where $\nngpk^{\err}_{x x'} = \nngpk^{\fii}_{x x'} + \nngpk^{\fiii}_{x x'}$. The sample mean square $\frac{1}{2} \bar{\err}^2(x; \paramii_\infty,\paramiii_0) = \frac{1}{2K} \sum_{i=1}^{K} \err_i^2(x;\paramii_\infty,\paramiii_0)$ is then known to follow a scaled Chi-squared distribution with $K$ degrees of freedom 
\begin{align}
    \smallfrac{1}{2} \bar{\err}^2(x; \paramii_\infty,\paramiii_0) \sim \frac{ \smallfrac{1}{2}\Sigma^\err_{x x}(\paramii_\infty,\paramiii_0) }{K} \chi^2(K)
\end{align}
where $\Sigma^\err_{x x} (\paramii_\infty,\paramiii_0)$ is the variance of the GP described in Theorem~\ref{thm:rnd_error_dist}. 

Conversely, a set of $K+1$ independent neural networks arranged to a deep ensemble $\{\f(x;\param_\infty^i)\}_{i=1}^{K+1}$ in the infinite width limit $n \to \infty$ and at convergence $t \to \infty$ are by definition i.i.d. samples from the GP described in Theorem~\ref{thm:converged_f_dist}. As before, the empirical variance defined as $\bar{\sigma}_f^2 (x; \param_\infty^{i\dots K+1}) = \frac{1}{K}\sum_{i=1}^{K+1} \bigl( \f(x;\param^i_\infty) - \frac{1}{K+1}\sum_{j=1}^{K+1} \f(x;\param^j_\infty) \bigr)^2$ distributes as a scaled Chi-squared distribution with $K$ degrees of freedom 
\begin{align}
    \bar{\sigma}_f^2 (x; \param_\infty^{i\dots K+1}) \sim \frac{ \Sigma^\f_{x x}(\param_\infty) }{K} \chi^2(K) \,,
\end{align}
where $\Sigma^\f_{x x}(\param_\infty)$ is the variance of the GP described in Theorem~\ref{thm:converged_f_dist}. 

Finally, as we assume equal architecture and i.i.d. initialization of $\fii$, $\fiii$, and $\f$, we have that $\nngpk^{\err}_{x x'} = \nngpk^{\fii}_{x x'} + \nngpk^{\fiii}_{x x'} = 2\nngpk^{\fii}_{x x'} = 2\nngpk_{xx'}$ and accordingly $\smallfrac{1}{2}\Sigma^\err_{x x}(\paramii_\infty,\paramiii_0) = \Sigma^\f_{x x}(\param_\infty)$, completing the proof. 
\end{proof}

\subsection{Posterior Equivalence} 
This section contains proofs for results pertaining to the equivalence of self-predictive errors of ``Bayesian RND'' and the variance of Bayesian posterior predictive distributions of neural networks in the infinite width limit. 
\subsubsection{Proof of Proposition ~\ref{thm:bayesrndrarget}} \label{proofbayesrndrarget}
We restate Proposition \ref{thm:bayesrndrarget} for convenience. 
\rndntkgpinit*
\begin{proof}
    The proof will show that in the limit $n \to \infty$ the function $\tilde{\fiii}(x;\paramii_0, \paramiii_0)$ converges to a GP $\tilde{\fiii}(x;\paramii_0, \paramiii_0) \sim \mathcal{GP}(0, \ntkk^{\leq L-1}_{xx'})$ by Lévy's continuity theorem, which we recall informally below. 
    \begin{restatable}{thm}{levycontinuity}\label{thm:levycontinuity}(Lévy's continuity theorem) 
        Let $\{ Z_n\}_{n=1}^\infty$ be a sequence of $\reals^n$-valued random variables. Their characteristic functions $\varphi_{Z_n}(t)$  for some $t \in \reals^n$ are given by 
        \begin{align}
            \varphi_{Z_n}(t) = \mathbb{E}[e^{i t^\top Z_n}] \,,
        \end{align}
        where $i$ is the imaginary unit. If in the limit $n \to \infty$ the sequence of characteristic functions converges pointwise to a function 
        \begin{align}
            \varphi_{Z_n}(t) \to \varphi(t) \quad \forall t \in \reals^n \,,
        \end{align}
        then $Z_n$ converges in distribution to a random variable $Z$ 
        \begin{align}
            Z_n \overset{D}{\to} Z \,,
        \end{align}
        whose characteristic function is $\varphi_Z(t) = \varphi(t)$
    \end{restatable}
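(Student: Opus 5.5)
We will prove Theorem~\ref{thm:levycontinuity} by the classical tightness-plus-uniqueness argument. Two clarifications come first. The dimension of the random vectors must be fixed, say $Z_n \in \reals^d$ with $d$ independent of the sequence index $n$; the statement overloads $n$ for both. We also add explicitly the hypothesis that the limit $\varphi$ is continuous at $t=0$, which the informal statement leaves implicit. Without it the claim fails: for $Z_n \sim \mathcal{N}(0,n)$ the characteristic functions converge pointwise to the indicator of $\{0\}$, which is not a characteristic function. In our application the limit is the Gaussian characteristic function $\exp(-\tfrac12 t^\top \ntkk^{\leq L-1} t)$, which is continuous, so the added hypothesis costs nothing.

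\emph{Step 1 (tightness).} This is the main step. For each coordinate $k$ we use the truncation inequality
\begin{align*}
\Pr\bigl(|Z_{n,k}| > 2/u\bigr) \le \frac{1}{u}\int_{-u}^{u}\bigl(1-\varphi_{Z_n}(s e_k)\bigr)\,\dd s, \qquad u>0,
\end{align*}
where $e_k$ is the $k$-th unit vector. It follows from Fubini and the bound $1 - \sin(ux)/(ux) \ge \tfrac12$ for $|ux|\ge 2$. Since $|\varphi_{Z_n}|\le 1$, dominated convergence sends the right-hand side to $\frac1u\int_{-u}^{u}(1-\varphi(se_k))\dd s$. By continuity of $\varphi$ at $0$, together with $\varphi(0)=\lim_n \varphi_{Z_n}(0)=1$, this limit is smaller than any given $\varepsilon$ once $u$ is small enough. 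Hence for all large $n$ every coordinate satisfies $\Pr(|Z_{n,k}|>2/u)<2\varepsilon$, and the finitely many remaining indices are handled individually. A union bound over the $d$ coordinates then gives tightness of $\{Z_n\}$.

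\emph{Step 2 (identifying the limit).} By Prokhorov's theorem, every subsequence of $\{Z_n\}$ has a further subsequence converging in distribution to some random vector $Z$. For fixed $t$ the map $z\mapsto e^{it^\top z}$ is bounded and continuous, so along that subsequence $\varphi_{Z_{n_j}}(t)\to\varphi_Z(t)$. Pointwise convergence of the full sequence then forces $\varphi_Z=\varphi$, so every subsequential limit has characteristic function $\varphi$.

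\emph{Step 3 (uniqueness and conclusion).} By the uniqueness theorem for characteristic functions (via the inversion formula, or Stone--Weierstrass density of trigonometric polynomials), all subsequential limits share a single law $\mu$, and $\varphi$ is its characteristic function. A sequence whose every subsequence has a further subsequence converging to the same law $\mu$ converges to $\mu$; otherwise some bounded continuous $h$ would satisfy $|\mathbb{E}h(Z_{n_j})-\int h\,\dd\mu|>\delta$ along a subsequence, contradicting Step 2. Thus $Z_n \overset{D}{\to} Z\sim\mu$ with $\varphi_Z=\varphi$.
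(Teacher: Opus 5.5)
Your proof is correct and follows the standard tightness-plus-uniqueness argument (truncation inequality, coordinatewise tightness, Prokhorov, subsequence principle) in \citet{durrett2019probability}; the paper does not prove this result and simply defers to that reference. Your two repairs to the statement are also warranted: the dimension $d$ must be fixed independently of the index $n$, and $\varphi$ must be continuous at $0$, since your $\mathcal{N}(0,n)$ example shows the claim is false otherwise; neither repair affects the paper's application, where the limit is the continuous Gaussian characteristic function $e^{-\frac{1}{2} t^\top \ntkk^{\leq L-1} t}$.
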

    \noindent Rigorous proof can be found for example in \citet{durrett2019probability}. 

    We begin by rewriting the function $\tilde{\fiii}(x;\paramii_0, \paramiii_0)$ as a linear model with 
    \begin{align}
        \tilde{\fiii}(x;\paramii_0, \paramiii_0) &= \nabla_\paramii \fii(x;\paramii_0)^\top \paramiii_0^* \\
        &= \nabla_{\paramii^{\leq L-1}} \fii(x;\paramii_0)^\top \paramiii_0^{\leq L-1} \,.
    \end{align}
    Since $\paramiii_0^{\leq L-1}$ is an independent draw from $\paramii_0$ by assumption, $\tilde{\fiii}(x;\paramii_0, \paramiii_0)$ is a random affine transform of the Gaussian vector $\paramiii_0^{\leq L-1}$. For more precise treatment of the distribution of $\tilde{\fiii}(x;\paramii_0, \paramiii_0)$, we write $\tilde{G}(\datax_T)$ to denote the random variable corresponding to the function evaluations of $\tilde{\fiii}$ on a test set $\datax_T$. Conditioned on $\paramii_0$ (i.e., fixing the affine transform), we thus have that $\tilde{G}(\datax_T)|\paramii_0 \sim \mathcal{GP}(0, \ntkk^{\leq L-1}_{0, \datax_T \datax_T})$, where $\ntkk^{\leq L-1}_{0, \datax_T \datax_T} = \nabla_{\paramii^{\leq L-1}} \fii(\datax_T;\paramii_0)^\top \nabla_{\paramii^{\leq L-1}} \fii(\datax_T;\paramii_0)$ is the empirical NTK matrix of $\fii$. Note that this statement holds irrespective of the network width $n$. 

    Next, we show that the unconditional law of $\tilde{G}(\datax_T)$, too, tends to a GP in the limit $n \to \infty$. To this end, we examine the distribution of the unconditioned random vector $\tilde{G}(\datax_T)$ through its characteristic function
    \begin{align}
        \varphi_{\tilde{G}(\datax_T)}(t) = \mathbb{E}[e^{i t^\top \tilde{G}(\datax_T)}]\,.
    \end{align}
    This characteristic function $\varphi_{\tilde{G}(\datax_T)}(t)$ uniquely defines the distribution of ${\tilde{G}(\datax_T)}$ \citep{durrett2019probability}. By the law of total expectation, the characteristic function of the unconditional variable $\tilde{G}(\datax_T)$ can then be written as 
    \begin{align} \label{eq:total_expectation_characteristic}
        \varphi_{\tilde{G}(\datax_T)}(t) = \mathbb{E}_{\paramii_0}\bigl[ \mathbb{E}[e^{i t^\top \tilde{G}(\datax_T)}|\paramii_0] \bigr]  \,.
    \end{align}
    As stated above, the conditional distribution of $\tilde{G}(\datax_T)|\paramii_0$ is a zero-mean Gaussian with the empirical covariance $\ntkk^{\leq L-1}_{0, \datax_T \datax_T}$, to which we can show the conditional characteristic function is given by \citep{durrett2019probability} 
    \begin{align}
        \mathbb{E}[e^{it^\top\tilde{G}(\datax_T)}|\paramii_0] = e^{- \frac{1}{2} t^\top \ntkk^{\leq L-1}_{0, \datax_T \datax_T} t} \,.
    \end{align}
    Plugging this back into Eq.~\ref{eq:total_expectation_characteristic} gives
    \begin{align}
        \varphi_{\tilde{G}(\datax_T)}(t) = \mathbb{E}_{\paramii_0}[e^{- \frac{1}{2} t^\top \ntkk^{\leq L-1}_{0, \datax_T \datax_T} t}] \,.
    \end{align}
    We now use the known result by \citet{jacotNeuralTangentKernel2020} that, as $n \to \infty$ we have that $\ntkk_{0, \datax_T \datax_T} \to \ntkk_{\datax_T \datax_T}$ in probability and accordingly $\ntkk^{\leq L-1}_{0, \datax_T \datax_T} \to \ntkk^{\leq L-1}_{\datax_T \datax_T}$ converges to a deterministic kernel matrix. Moreover, since the Gram matrix $\ntkk^{\leq L-1}_{0, \datax_T \datax_T}$ is positive semidefinite in general, the term $e^{- \frac{1}{2} t^\top \ntkk^{\leq L-1}_{0, \datax_T \datax_T} t}$ is bounded and continuous. By bounded convergence \citep{durrett2019probability}, we can then conclude that we also have convergence of the characteristic function through 
    \begin{align}
        \lim_{n \to \infty} \varphi_{\tilde{G}(\datax_T)}(t) &= \lim_{n \to \infty} \mathbb{E}_{\paramii_0}[e^{- \frac{1}{2} t^\top \ntkk^{\leq L-1}_{0, \datax_T \datax_T} t}] \\ 
        &= e^{- \frac{1}{2} t^\top \ntkk^{\leq L-1}_{\datax_T \datax_T} t} \,.
    \end{align}
    As stated earlier, for a Gaussian random vector $Z$ with $Z \sim \mathcal{GP}(0, \ntkk_{\datax_T \datax_T}^{\leq L-1})$ its characteristic function is given by $e^{- \frac{1}{2} t^\top \ntkk^{\leq L-1}_{\datax_T \datax_T} t}$. Invoking Lévy's continuity theorem, the pointwise convergence of $\varphi_{\tilde{G}(\datax_T)}(t)$ to this exact limit $\varphi_{\tilde{G}(\datax_T)}(t) \to e^{- \frac{1}{2} t^\top \ntkk^{\leq L-1}_{\datax_T \datax_T} t}$ then implies convergence in distribution of $\tilde{G}(\datax_T) \overset{D}{\to} Z$ and we can thus conclude $\tilde{\fiii}(x;\paramii_0, \paramiii_0) \sim \mathcal{GP}(0, \ntkk^{\leq L-1}_{x x'})$.
\end{proof}

\subsubsection{Proof of Theorem ~\ref{thm:bayes_rnd_error_dist}} \label{proofbayes_rnd_error_dist}
We restate Proposition \ref{thm:bayesrndrarget} for convenience. 
\rndbayesrnderrordist*
\begin{proof}
    The result follows from the independence of the two GP of interest in the limit $n \to \infty$. First, this is $\tilde{\fiii}(x;\paramii_0, \paramiii_0) \sim \mathcal{GP}(0, \ntkk_{xx'}^{\leq L-1})$ and second, $\fii(x;\paramii_0) \sim \mathcal{GP}(0, \ntkk_{xx'}^{L})$. In the following, we will show that the two GPs are in the limit $n \to \infty$ independent processes such that Eq.~\ref{eq:bayeserrorprior} applies. 

    We first write for any two points $x,x'$ the covariance 
    \begin{align}
        \mathrm{Cov}[\tilde{\fiii}(x;\paramii_0, \paramiii_0), \fii(x';\paramii_0)] = \mathbb{E}[\tilde{\fiii}(x;\paramii_0, \paramiii_0) \fii(x';\paramii_0)]\,.
    \end{align}
    As $\paramiii_0$ is drawn independently of $\paramii_0$, the conditional expectation can be written as 
    \begin{align}
        \mathbb{E}[\tilde{\fiii}(x;\paramii_0, \paramiii_0) \fii(x';\paramii_0) | \paramii_0] &= \fii(x';\paramii_0) \mathbb{E}[\tilde{\fiii}(x;\paramii_0, \paramiii_0)|\paramii_0] \\
        &= \fii(x';\paramii_0) \mathbb{E}[ \nabla_{\paramii^{\leq L-1}} \fii(x;\paramii_0)^\top \paramiii_0^{\leq L-1} | \paramii_0] \\ 
        &= \fii(x';\paramii_0) \cdot 0 \,,
    \end{align}
    and by the law of total expectation
    \begin{align}
        \mathbb{E}[\tilde{\fiii}(x;\paramii_0, \paramiii_0) \fii(x';\paramii_0)] &= \mathbb{E}_{\paramii_0} \bigl[ \mathbb{E}[\tilde{\fiii}(x;\paramii_0, \paramiii_0) \fii(x';\paramii_0) | \paramii_0]\bigr] \\
        &= 0 \,.
    \end{align}
    We conclude that the two GP $\tilde{\fiii}(x;\paramii_0, \paramiii_0) \sim \mathcal{GP}(0, \ntkk_{xx'}^{\leq L-1})$ and $\fii(x;\paramii_0) \sim \mathcal{GP}(0, \ntkk_{xx'}^{L})$ are mutually independent such that the initialization kernel $\nngpk^{\err^b}_{xx'}$ is given as 
    \begin{align}
        \nngpk^{\err^b}_{xx'} = \ntkk_{xx'} \,.
    \end{align}
    This is because $\ntkk_{xx'} = \ntkk^{L}_{xx'} + \ntkk^{\leq L-1}_{xx'}$ and $\nngpk^{\tilde{\fiii}}_{xx'} = \ntkk^{\leq L-1}_{xx'}$, $\nngpk^\fii_{xx'} = \ntkk^{L}_{xx'}$ are mutually independent. 
\end{proof}

\clearpage
\section{Additional Experimental Details} \label{app:experiments}

We report additional experimental details and evaluations. As outlined in the main text, we use two-layer fully connected neural networks with SiLU activations and NTK parametrization. All weights and biases are initialized as $\param \sim \mathcal{N}(0, I)$. We use an ensemble of 512 models and a single multiheaded RND network with 512 heads. A synthetic dataset is generated with $N=10$ train and $\tilde{N}=5000$ test samples from an isotropic Gaussian $x \sim \mathcal{N}(0,I_3)$. We label training samples with a synthetic target function 
\begin{equation}
    y(x) = x^0 + x^1 + x^2 - 2 \prod_{i=1}^3 x^i,
\end{equation}
where $x^i$ denotes the $i$-th component of vector $x$. All models are trained according to the algorithms outlined in the main text. For this, we use full-batch gradient descent with a learning rate of $0.1$ for all models. Fig. ~\ref{fig:additionalexperiments} shows additional results of the same experiment, in which we plot individual test-set ensemble variances against RND errors. As the network width increases, ensemble variances and self-predictive RND errors become more correlated and well-calibrated in scale. 

Code for full reproduction will be released upon publication. 

\begin{figure}[t]
    \centering
    \begin{subfigure}{0.45\textwidth}
        \centering
        \includegraphics[width=\columnwidth]{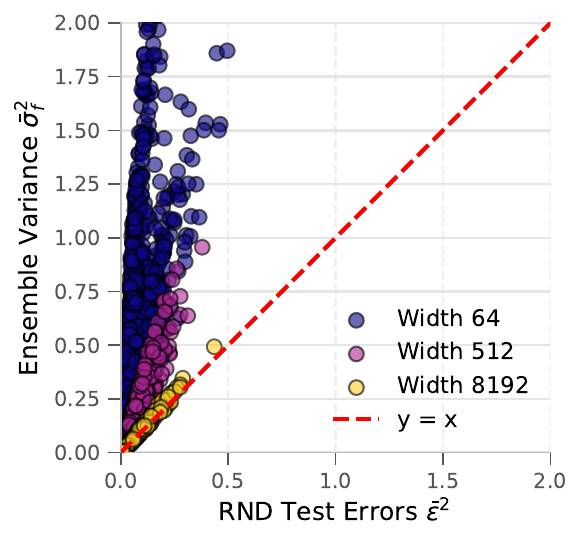}
    \end{subfigure}
    \hspace{6mm}
    \begin{subfigure}{0.43\textwidth}
        \centering
        \includegraphics[width=\columnwidth]{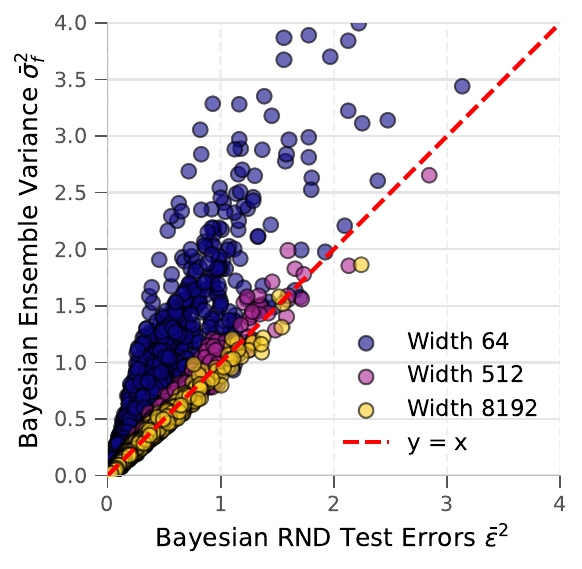}
    \end{subfigure}
    \begin{center}\footnotesize
    \vspace{-9.5mm}
    \hspace{-42mm}(a)\hspace{71mm}(b)
    \end{center}
    \caption{(a) Scatter plot of test-set errors between predictive variances of ensembles and self-predictive errors of RND. As width increases, errors become more correlated and correctly calibrated in scale. (b) Likewise, for \textit{Bayesian} ensembles and \textit{Bayesian} RND.}
    \label{fig:additionalexperiments}
\end{figure}

\end{document}